\documentclass{article}

%

\usepackage[nonatbib, final]{nips_2017}


\usepackage[utf8]{inputenc} 
\usepackage[T1]{fontenc}    
\usepackage{hyperref}       
\usepackage{url}            
\usepackage{booktabs}       
\usepackage{amsfonts}       
\usepackage{nicefrac}       
\usepackage{microtype}      
\usepackage{verbatim}
\usepackage{amsmath}
\usepackage[belowskip=-13pt,aboveskip=0pt]{caption}
\usepackage{amsthm}
\usepackage{graphicx} 
\usepackage{amssymb}
\usepackage{placeins}

\theoremstyle{plain}
\newtheorem{thm}{\protect\theoremname}
  \theoremstyle{definition}
  
  \theoremstyle{plain}
  \newtheorem{prop}[thm]{\protect\propositionname}
  \theoremstyle{plain}

\makeatother

  \providecommand{\corollaryname}{Corollary}
  \providecommand{\definitionname}{Definition}
  \providecommand{\propositionname}{Proposition}
\providecommand{\theoremname}{Theorem}

\usepackage{times}
\usepackage{subfigure} 

\usepackage{algorithm}
\usepackage{algorithmic}

\newcommand{\noisE}{{\mathcal{E}}}
\newcommand{\eqd}{{\overset{d}{=}}}

\title{Testing and Learning on Distributions with Symmetric Noise Invariance}

%

\author{
  Ho Chung Leon Law \\
  Department of Statistics\\
  University Of Oxford\\
  \texttt{hlaw@stats.ox.ac.uk} \\
  \And
  Christopher Yau\\
  Centre for Computational Biology \\
  University of Birmingham \\
  \texttt{c.yau@bham.ac.uk } \\
   \And
  Dino Sejdinovic \\
  Department of Statistics \\
  University Of Oxford \\
  \texttt{dino.sejdinovic@stats.ox.ac.uk} \\
}

\begin{document}

\maketitle

\begin{abstract}
 Kernel embeddings of distributions and the Maximum Mean Discrepancy (MMD), the resulting distance between distributions, are useful tools for fully nonparametric two-sample testing and learning on distributions. However, it is rare that all possible differences between samples are of interest -- discovered differences can be due to different types of measurement noise, data collection artefacts or other irrelevant sources of variability. We propose distances between distributions which encode invariance to additive symmetric noise, aimed at testing whether the assumed true underlying processes differ. Moreover, we construct invariant features of distributions, leading to learning algorithms robust to the impairment of the input distributions with symmetric additive noise. 
\end{abstract}

\section{Introduction}
\label{sec: Intro}
There are many sources of variability in data, and not all of them are pertinent to the questions that a data analyst may be interested in. Consider, for example, a nonparametric two-sample testing problem, which has recently been attracting significant research interest, especially in the context of kernel embeddings of distributions \cite{chwialkowski2015fast, gretton2012kernel, jitkrittum2016interpretable}. We observe samples $\{X_{1j}\}_{j=1}^{N_1}$ and $\{X_{2j}\}_{j=1}^{N_2}$ from two data generating processes $P_1$ and $P_2$, respectively, and would like to test the null hypothesis that $P_1=P_2$ without making any parametric assumptions on these distributions. With a large sample-size, the minutiae of the two data generating processes are uncovered (e.g. slightly different calibration of the data collecting equipment, different numerical precision
), and we ultimately reject the null hypothesis, even if the sources of variation across the two samples may be irrelevant for the analysis. 

Similarly, we may be interested in \emph{learning on distributions} \cite{Muandet2012,Sutherland2016,Szabo2015}, where the appropriate level of granularity in the data is distributional. For example, each label $y_i$ in supervised learning is associated to a whole bag of observations $B_i=\{X_{ij}\}_{j=1}^{N_i}$ -- assumed to come from a probability distribution $P_i$, or we may be interested in clustering such bags of observations. Again, nonparametric distances used in such contexts to facilitate a learning algorithm on distributions, such as Maximum Mean Discrepancy (MMD) \cite{gretton2012kernel}, can be sensitive to irrelevant sources of variation and may lead to suboptimal or even misleading results, in which case building predictors which are invariant to noise is of interest.  

While it may be tempting to revert back to a parametric setup and work with simple, easy to interpret models, we argue that a different approach is possible: we stay within a nonparametric framework, exploit the irregular and complicated nature of real life distributions and \emph{encode invariances} to sources of variation assumed to be irrelevant. In this contribution, we focus on \emph{invariances to symmetric additive noise} on each of the data generating distributions. Namely, assume that the $i$-th sample $\{X_{ij}\}_{j=1}^{N_i}$ we observe does not follow the distribution $P_i$ of interest but instead its convolution $P_i\star \noisE_i$ with some unknown noise distributions $\noisE_i$ assumed to be symmetric about $0$ (we also require that it has a positive characteristic function). We would like to assess the differences between $P_i$ and $P_{i'}$ while allowing $\noisE_{i}$ and $\noisE_{i'}$ to differ in an arbitrary way. We investigate two approaches to this problem: (1) measuring the degree of asymmetry of the paired differences $\{X_{ij}-X_{i'j}\}$, and (2) comparing the \emph{phase functions} of the corresponding samples. While the first approach is simpler and presents a sensible solution for the two-sample testing problem, we demonstrate that phase functions give a much better gauge on the \emph{relative comparisons} between bags of observations, as required for learning on distributions. 

The paper is outlined as follows. In section \ref{sec:setup}, we provide an overview of the background. In section \ref{sec:main}, we provide details of the construction and implementation of phase features. In section \ref{sec:paired differences}, we discuss the approach based on asymmetry in paired differences for two sample testing with invariances. Section \ref{sec:results} provides experiments on synthetic and real data, before concluding in section \ref{sec: conclu}. 
\section{Background and Setup}
\label{sec:setup}
We will say that a random vector $E$ on $\mathbb R^d$ is a \emph{symmetric positive definite (SPD) component} if its characteristic function is positive, i.e. $\varphi_E(\omega)=\mathbb E_{X\sim E} \left[\exp(i\omega^\top E)\right]>0$, $\forall \omega\in\mathbb R^d$. This means that $E$ is (1) symmetric about zero, i.e. $E$ and $-E$ have the same distribution and (2) if it has a density, this density must be a positive definite function \cite{rossberg1995positive}. Note that many distributions used to model additive noise, including the spherical zero-mean Gaussian distribution, as well as multivariate Laplace, Cauchy or Student's $t$ (but not uniform), are all SPD components.

Following the terminology similar to that of \cite{delaigle2016methodology}, we will say that a random vector $X$ on $\mathbb R^d$ is \emph{decomposable} if its characteristic function can be written as $\varphi_X=\varphi_{X_0}\varphi_{E}$, with $\varphi_{E}>0$. Thus, if $X$ can be written in the form $X=X_0+E$, where $X_0$ and $E$ are independent and $E$ is an SPD noise component, then $X$ is decomposable. We will say that $X$ is \emph{indecomposable} if it is not decomposable.  In this paper, we will assume that mostly the indecomposable components of distributions are of interest and will construct tools to directly measure differences between these indecomposable components, encoding invariance to other sources of variability. The class of Borel Probability measures on $\mathbb R^d$ will be denoted $\mathcal{M}^{1}_+(\mathbb{R}^d)$, while the class of indecomposable probability measures will be denoted by $\mathcal{I}(\mathbb{R}^d) \subseteq \mathcal{M}^{1}_+(\mathbb{R}^d)$.    
\subsection{Kernel Embeddings, Fourier Features and learning on distributions} 
\label{sec:background}
For any positive definite function $k\colon \mathcal{X} \times \mathcal{X} \mapsto \mathbb{R}$, there exists a unique reproducing kernel Hilbert space (RKHS) $\mathcal H_k$ of real-valued functions on $\mathcal{X}$. Function $k(\cdot,x)$ is an element of $\mathcal H_k$ and represents evaluation at $x$, i.e. $\langle f, k(\cdot,x)\rangle_\mathcal{H} = f(x)$, $\forall f\in\mathcal H_k$, $\forall x\in\mathcal X$. The kernel mean embedding (cf. \cite{muandet2016kernel} for a recent review) of a probability measure $P$ is defined by
$\mu_P = \mathbb{E}_{X\sim P}[k(\cdot,X)] = \int_{\mathcal X} k(\cdot, x)dP(x)$. The Maximum Mean Discrepancy (MMD) between probability measures $P$ and $Q$ is then given by $\Vert \mu_P-\mu_Q\Vert_{\mathcal H_k}$. For shift-invariant kernels on $\mathbb R^d$, using Bochner's characterisation of positive definiteness \cite[6.2]{Wendland2004}, the squared MMD can be written as a weighted $L_2$-distance between characteristic functions \cite[Corollary 4]{Sriperumbudur2010}
\begin{equation}
\Vert \mu_P-\mu_Q\Vert^2_{\mathcal H_k}=\int_{\mathbb R^d}\left|\varphi_{P}\left(\omega\right)-\varphi_{Q}\left(\omega\right)\right|^{2}d\Lambda\left(\omega\right),
\end{equation}
where $\Lambda$ is the non-negative spectral measure (inverse Fourier transform) of kernel $k$ as a function of $x-y$, while $\varphi_P(\omega)$ and $\varphi_Q(\omega)$ are the characteristic functions of probability measures $P$ and $Q$. 

Bochner's theorem is also used to construct random Fourier features (RFF) \cite{rahimi2007random} for fast approximations to kernel methods in order to approximate a pre-specified shift-invariant kernel by a finite dimensional explicit feature map. If we can draw samples from its spectral measure $\Lambda$, we can approximate $k$ by\footnote{a \emph{complex feature map}  \scalebox{0.87}{$\phi(x)=\sqrt{\frac{1}{m}}\left[\exp\left(i\omega_1^\top x\right),\ldots,\exp\left(i\omega_m^\top x\right)\right]$} can also be used, but we follow the convention of real-valued Fourier features, since kernels of interest are typically real-valued.} 
$$ \hat{k}(x,y)  = 
 \frac{1}{m}\sum_{j=1}^m \big[\cos(\omega_j^T x)\cos(\omega_j^T y) + \sin (\omega_j^T x) \sin (\omega_j^T y)\big]  =  \langle \phi(x), \phi(y) \rangle_{\mathbb R^{2m}}  
$$where $\omega_1,\dots,\omega_m \sim \Lambda$ and
 \scalebox{0.85}{ 
 $ \phi(x):=\sqrt{\frac{1}{m}}\left[\cos\left(\omega_1^\top x\right),\sin\left(\omega_1^\top x\right)\ldots,\cos\left(\omega_m^\top x\right),\sin\left(\omega_m^\top x\right)\right].  $}
Thus, the explicit computation of the kernel matrix is not needed and the computational complexity is reduced. This also allows computation with the approximate, finite-dimensional embeddings $\tilde\mu_P=\Phi(P)={\mathbb E}_{X\sim P} \phi(X)\in\mathbb R^{2m}$, which can be understood as the evaluations (real and complex part stacked together) of the characteristic function $\varphi_P$ at frequencies $\omega_1,\dots,\omega_m$. We will refer to the approximate embeddings $\Phi(P)$ as Fourier features of distribution $P$.

Kernel embeddings can be used for supervised learning on distributions. Assume we have a training set $\{B_i, y_i\}_{i = 1}^{n}$, where input $B_i=\{x_{ij}\}_{j=1}^{N_i}$ is a bag of samples taking values in $\mathcal X$, and $y_i$ is a response. Given a kernel $k\colon \mathcal{X} \times \mathcal{X} \to \mathbb{R}$, we first map  each $B_i$ to the empirical embedding $\mu_{\hat P_i}=\frac{1}{N_i}\sum_{j=1}^{N_i}k(\cdot,x_{ij})\in\mathcal{H}_k$ and then can apply any positive definite kernel on $\mathcal H_k$ as the kernel on bag inputs, e.g. linear kernel $\tilde{K}(B_i,B_i')=\langle \mu_{\hat P_i}, \mu_{\hat P_{i'}} \rangle_{\mathcal H_k}$, in order to perform classification \cite{Muandet2012} or regression \cite{Szabo2015}. Approximate kernel embeddings have also been applied in this context \cite{Sutherland2016}. 
\section{Phase Discrepancy and Phase Features}
\label{sec:main}
While MMD and kernel embeddings are related to characteristic functions, and indeed the same connection forms a basis for fast approximations to kernel methods using random Fourier features  \cite{rahimi2007random}, the relevant notion in our context is the \emph{phase function} of a probability measure, recently used for nonparametric deconvolution by \cite{delaigle2016methodology}. In this section, we overview this formalism. Based on the empirical phase functions, we will then derive and investigate hypothesis testing and learning framework using \emph{phase features of distributions}.
 
In nonparametric deconvolution \cite{delaigle2016methodology}, the goal is to estimate the density function $f_{0}$ of a univariate r.v. $X_0$, but in general we only have noisy data samples $X_1, \dots, X_n 
\stackrel{iid}{\sim} X = X_0 + E$, where $E$ denotes an independent noise term. 
Even though the distribution of $E$ is unknown, making the assumption that $E$ is an SPD noise component, and that $X_0$ is indecomposable, i.e. $X_0$ itself does not contain any SPD noise components, \cite{delaigle2016methodology} show that it is possible to obtain consistent estimates of $f_0$. 

They distinguish between the symmetric noise and the underlying indecomposable component by matching phase functions, defined as
$$\rho_{X}\left(\omega\right)=\frac{\varphi_{X}\left(\omega\right)}{\left|\varphi_{X}\left(\omega\right)\right|}
$$where $\varphi_{X}\left(\omega\right)$ denotes the characteristic function of $X$. 
Observe that $\left|\rho_{X}\left(\omega\right)\right|=1$, and thus we are effectively removing the amplitude information from the characteristic function. For a SPD noise component $E$, the phase function is $\rho_E(\omega)\equiv 1$. But then since $\varphi_{X} = \varphi_{X_0} \varphi_{E}$, we have that $\rho_{X_{0}} = \rho_{X} = \varphi_{X} / |\varphi_{X} |$, i.e. the phase function is invariant to additive SPD noise components. This motivates us to construct explicit feature maps of distributions with the same property and similarly to the motivation of \cite{delaigle2016methodology}, we argue that real-world distributions of interest often exhibit certain amount of irregularity and it is exactly this irregularity which is exploited in our methodology.

In analogy to the MMD, we first define the phase discrepancy (PhD) as a weighted $L_{2}$-distances between the phase functions:
\begin{equation}
\label{eqn:phd}
\text{PhD}(X,Y) =\displaystyle\int_{\mathbb R^d}\left|\rho_{X}\left(\omega\right)-\rho_{Y}\left(\omega\right)\right|^{2}d\Lambda\left(\omega\right) 
\end{equation}
for some non-negative measure $\Lambda$ (w.l.o.g. a probability measure). Now suppose we write $X=X_{0}+U$, $Y=Y_{0}+V$, where $U$ and $V$ are SPD noise components.  This then implies $\rho_{X}=\rho_{X_{0}}$ and $\rho_{Y}=\rho_{Y_{0}}$ $\Lambda$-everywhere, so that $\text{PhD}(X,Y)=\text{PhD}(X_{0},Y_{0})$. It is clear then that the PhD is not affected by additive SPD noise components, so it captures desired invariance. However, the PhD for $\Lambda$ supported everywhere is in fact not a proper metric on the indecomposable probability measures $\mathcal I(\mathbb{R}^d)$, as one can find indecomposable random variables $X$ and $Y$ s.t. $\rho_X=\rho_Y$ and thus $\text{PhD}(X,Y)=0$. An example is given in Appendix \ref{sec:indecom}.

While such cases appear contrived, we hence restrict attention to a subset of indecomposable probability measures $\mathcal{P}(\mathbb{R}^d) \subset \mathcal{I}(\mathbb{R}^d)$, which are uniquely determined by phase functions, i.e. $\forall P, Q \in \mathcal{P}(\mathbb{R}^d): \rho_P = \rho_Q \Rightarrow P = Q$.

We now have the two following propositions (proofs are given in Appendix \ref{sec:proofs}).
\begin{prop} 
$$
\scalebox{1.0}{
 \text{PhD}$(X,Y)=2-2\int \left( \frac{\mathbb{E}\xi_{\omega}(X)}{\left\Vert \mathbb{E}\xi_{\omega}(X)\right\Vert }\right)^{\top} \left( \frac{\mathbb{E}\xi_{\omega}(Y)} {
\left\Vert \mathbb{E} \xi_{\omega} (Y) \right\Vert } \right) d\Lambda(\omega) $} $$
where $\xi_{\omega}\left(x\right)=\left[\cos\left(\omega^{\top}x\right),\sin\left(\omega^{\top}x\right)\right]^\top$ and $\left\Vert \  \cdot  \ \right\Vert$ denotes the standard $L_2$ norm.
\end{prop}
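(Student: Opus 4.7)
The plan is to unwind the defining integral for $\text{PhD}(X,Y)$ by expanding the squared modulus of the difference of phase functions and then identifying the complex-valued phase function with the 2-vector of its real and imaginary parts.

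First I would expand the integrand using $|\rho_X(\omega)|=|\rho_Y(\omega)|=1$ (which holds by definition of the phase function):
\begin{equation*}
|\rho_X(\omega)-\rho_Y(\omega)|^2 = |\rho_X(\omega)|^2 + |\rho_Y(\omega)|^2 - 2\,\mathrm{Re}\!\left(\rho_X(\omega)\overline{\rho_Y(\omega)}\right) = 2 - 2\,\mathrm{Re}\!\left(\rho_X(\omega)\overline{\rho_Y(\omega)}\right).
\end{equation*}
Integrating against the probability measure $\Lambda$ yields the constant $2$ from the first two terms, matching the $2$ appearing outside the integral in the claim.

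Next I would translate $\mathrm{Re}(\rho_X(\omega)\overline{\rho_Y(\omega)})$ into a real inner product. By Euler's formula, $\varphi_X(\omega)=\mathbb{E}[\cos(\omega^\top X)]+i\,\mathbb{E}[\sin(\omega^\top X)]$, so if we identify the complex number $\varphi_X(\omega)$ with the real 2-vector $\mathbb{E}\xi_\omega(X)=[\mathbb{E}\cos(\omega^\top X),\,\mathbb{E}\sin(\omega^\top X)]^\top$, then $|\varphi_X(\omega)|=\|\mathbb{E}\xi_\omega(X)\|$ and under this identification $\rho_X(\omega)=\varphi_X(\omega)/|\varphi_X(\omega)|$ corresponds exactly to $\mathbb{E}\xi_\omega(X)/\|\mathbb{E}\xi_\omega(X)\|$. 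For any two complex numbers $a=a_1+ia_2$, $b=b_1+ib_2$ one has $\mathrm{Re}(a\bar b)=a_1 b_1+a_2 b_2$, i.e.\ the Euclidean inner product of the associated 2-vectors; applying this to $a=\rho_X(\omega)$ and $b=\rho_Y(\omega)$ gives
\begin{equation*}
\mathrm{Re}\!\left(\rho_X(\omega)\overline{\rho_Y(\omega)}\right) = \left(\frac{\mathbb{E}\xi_\omega(X)}{\|\mathbb{E}\xi_\omega(X)\|}\right)^{\!\!\top}\!\!\left(\frac{\mathbb{E}\xi_\omega(Y)}{\|\mathbb{E}\xi_\omega(Y)\|}\right).
\end{equation*}
Substituting back into the integral and combining with the constant term completes the proof.

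There is no real obstacle here; the content is notational/bookkeeping. The only mild subtlety is the tacit assumption (mentioned right before the proposition) that $\Lambda$ is a probability measure, which is what turns $\int 2\,d\Lambda(\omega)$ into the constant $2$ in the statement; for a general nonnegative $\Lambda$ one would instead get $2\Lambda(\mathbb{R}^d)$. One should also note that the integrand is well-defined $\Lambda$-a.e.\ on the set where $\varphi_X$ and $\varphi_Y$ are nonzero, which is the natural domain on which $\rho_X,\rho_Y$ are defined in the first place.
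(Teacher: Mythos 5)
Your proof is correct and follows essentially the same route as the paper's: expand $|\rho_X-\rho_Y|^2$ using $|\rho_X|=|\rho_Y|=1$ to get $2-2\,\mathrm{Re}(\rho_X\overline{\rho_Y})$, then recognise the cross term as the Euclidean inner product of the normalised vectors $\mathbb{E}\xi_\omega(\cdot)/\Vert\mathbb{E}\xi_\omega(\cdot)\Vert$. The only (immaterial) difference is that you identify $\varphi_X(\omega)$ with the $2$-vector $\mathbb{E}\xi_\omega(X)$ directly, whereas the paper reaches the same inner product by first rewriting the cross term as $2\varphi_Z/\sqrt{\varphi_{X-X'}\varphi_{Y-Y'}}$ via independent copies and a symmetric mixture $Z$ of $X-Y$ and $Y-X$, and then converting those symmetric characteristic functions into expectations of cosines.
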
 
\begin{prop}
$$
\scalebox{1.0}{$K\left(P_{X},P_{Y}\right)=\int\left(\frac{\mathbb{E}\xi_{\omega}(X)}{\left\Vert \mathbb{E}\xi_{\omega}(X)\right\Vert }\right)^{\top}\left(\frac{\mathbb{E}\xi_{\omega}(Y)}{\left\Vert \mathbb{E}\xi_{\omega}(Y)\right\Vert }\right)d\Lambda(\omega)$}$$
is a positive definite kernel on probability measures.
\end{prop}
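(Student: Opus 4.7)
The plan is to exhibit an explicit feature map at each frequency and then integrate. First I would observe that
$$\mathbb{E}\xi_\omega(X) = \bigl[\mathrm{Re}\,\varphi_X(\omega),\, \mathrm{Im}\,\varphi_X(\omega)\bigr]^\top \in \mathbb{R}^2,$$
so $\|\mathbb{E}\xi_\omega(X)\| = |\varphi_X(\omega)|$, and hence the normalised vector
$$\Phi_\omega(P_X) := \frac{\mathbb{E}\xi_\omega(X)}{\|\mathbb{E}\xi_\omega(X)\|} = \bigl[\mathrm{Re}\,\rho_X(\omega),\, \mathrm{Im}\,\rho_X(\omega)\bigr]^\top \in \mathbb{R}^2$$
is well defined whenever $\varphi_X(\omega)\neq 0$. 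Restricting as in the preceding paragraph to $P_X, P_Y \in \mathcal{P}(\mathbb R^d)$ (or, more generally, handling the $\Lambda$-null set where the characteristic function vanishes) ensures that $\Phi_\omega$ is defined $\Lambda$-a.e.

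Next I would define the pointwise kernel
$$K_\omega(P_X, P_Y) := \langle \Phi_\omega(P_X), \Phi_\omega(P_Y)\rangle_{\mathbb R^2},$$
so that the integrand in the statement is exactly $K_\omega$. Since $K_\omega$ is a bona fide inner product of a feature map into $\mathbb{R}^2$, it is positive definite: for any finite collection of probability measures $P_1,\dots,P_n$ and real coefficients $c_1,\dots,c_n$,
$$\sum_{i,j=1}^n c_i c_j K_\omega(P_i, P_j) = \Bigl\| \sum_{i=1}^n c_i \Phi_\omega(P_i) \Bigr\|_{\mathbb R^2}^2 \geq 0.$$

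Finally, because $|K_\omega(P_X, P_Y)| \leq 1$ for every $\omega$ and $\omega \mapsto K_\omega(P_X, P_Y)$ is measurable, Fubini's theorem gives
$$\sum_{i,j=1}^n c_i c_j\, K(P_i, P_j) = \int_{\mathbb R^d} \sum_{i,j=1}^n c_i c_j K_\omega(P_i, P_j)\, d\Lambda(\omega) \geq 0,$$
since the integrand is non-negative $\Lambda$-a.e.\ by the previous step. Symmetry $K(P_X, P_Y) = K(P_Y, P_X)$ is immediate from symmetry of the pointwise inner product, so $K$ is positive definite as claimed.

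The main (minor) obstacle is the well-definedness of the normalisation on the set $\{\omega : \varphi_X(\omega)=0\}$; the cleanest fix is to work on $\mathcal{P}(\mathbb{R}^d)$ and to observe that vanishing of $\varphi_X$ at isolated points (or on a $\Lambda$-null set) can be absorbed by adopting any measurable extension of $\Phi_\omega$ there — the integral is unchanged and the argument above goes through verbatim. No deep machinery is required; the content of the proposition is really that integrals of rank-one inner-product kernels against a non-negative measure are positive definite.
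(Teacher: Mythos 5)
Your proof is correct and follows essentially the same route as the paper's: both identify the integrand as the inner product of the unit-normalised feature vector $\mathbb{E}\xi_{\omega}(X)/\left\Vert \mathbb{E}\xi_{\omega}(X)\right\Vert \in\mathbb{R}^{2}$ (the paper phrases this via the standard normalised-kernel construction applied to $\kappa_{\omega}(P_X,P_Y)=\mathbb{E}\cos(\omega^{\top}(X-Y))$, which is the same argument), and both conclude by noting that integrating the pointwise positive definite kernels $K_{\omega}$ against the non-negative measure $\Lambda$ preserves positive definiteness. Your added remark on well-definedness where $\varphi_X(\omega)=0$ is a useful point of care that the paper leaves implicit.
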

Now, we can construct an approximate explicit feature map for kernel $K$. Taking a sample $\left\{ \omega_{i}\right\} _{i=1}^{m}\sim\Lambda$, we define $\Psi:P_{X}\mapsto\mathbb{R}^{2m}$ given by
$\scalebox{1.0}{$\Psi(P_{X}) = \sqrt{\frac{1}{m}} \left[\frac{\mathbb{E}\xi_{\omega_{1}}(X)} {\left\Vert \mathbb{E}\xi_{\omega_{1}}(X)\right\Vert} ,\ldots,\frac{\mathbb{E}\xi_{\omega_{m}}(X)}{\left\Vert \mathbb{E}\xi_{\omega_{m}}(X)\right\Vert }\right] $}
$. We will refer to $\Psi(\cdot)$ as the \emph{phase features}. Note that these are very similar to Fourier features, but the $\cos,\sin$-pair corresponding to each frequency is normalised to have unit $L_2$ norm. In other words, $\Psi(\cdot)$ can be thought of as evaluations of the phase function at the selected frequencies. By construction, phase features are invariant to additive SPD noise components. For an empirical measure, we simply have the following: \begin{equation}
\label{eqn:phase_em}
\scalebox{1.0}{$ \Psi(\hat{P}_{X})=\sqrt{\frac{1}{m}} \left[\frac{\hat{\mathbb{E}}\xi_{\omega_{1}}(X)}{\left\Vert \mathbb{\hat{E}}\xi_{\omega_{1}}(X)\right\Vert },\ldots,\frac{\mathbb{\hat{E}}\xi_{\omega_{m}}(X)}{\left\Vert \mathbb{\hat{E}}\xi_{\omega_{m}}(X)\right\Vert }\right]$}
\end{equation}
where we have replaced the expectations by their empirical estimates. Because $\left\Vert \Psi(\hat{P}_{X})\right\Vert =1$, we can construct
\begin{equation}
\label{eqn:phase_statistic}
\widehat{\text{PhD}}(\hat{P}_{X},\hat{P}_{Y}) =  \left\Vert \Psi(\hat{P}_{X})-\Psi(\hat{P}_{Y})\right\Vert ^{2}
 =  2-2\Psi(\hat{P}_{X})^{\top}\Psi(\hat{P}_{Y}),
\end{equation}
which is a Monte Carlo estimator of $\text{PhD}(\hat{P}_{X},\hat{P}_{Y})$. In summary, $\Psi(\hat{P})\in\mathbb R^{2m}$ is an explicit feature vector of the empirical distribution which encodes invariance to additive SPD noise components present in $P$ \footnote{Note that, unlike the population expression $\Psi(P)$, the empirical estimator $\Psi(\hat P)$ will in general have a distribution affected by the noise components and is thus only approximately invariant, but we observe that it captures invariance very well as long as the signal-to-noise regime remains relatively high (Section \ref{sec: sample}).}, as demonstrated in Figure \ref{sec:pdexp} in the Appendix. It can now be directly applied to (1) two-sample testing up to SPD components, where the distance between the phase features, i.e. an estimate \eqref{eqn:phase_statistic} of the PhD, can be used as a test statistic, with details given in section \ref{sec: sample} and (2) learning on distributions, where we use phase features as the explicit feature map for a bag of samples. 

Although we have assumed an indecomposable underlying distribution so far, this assumption is not strict. For distribution regression, if the indecomposable assumption is invalid, given that the underlying distribution is irregular, it may still be useful to encode invariance as long as the benefit of removing the SPD components irrelevant for learning outweighs the signal in the SPD part of the distribution, i.e. there is a trade off between SPD noise and SPD signal. In practice, the phase features we propose can be used to encode such invariance where appropriate or in conjunction with other features which do not encode invariance. 

In order to construct the approximate mean embeddings for learning, we first compute an explicit feature map by taking averages of the Fourier features, as given by 
$ \Phi(\hat{P}_X) = \sqrt{\frac{1}{m}}\left[\hat{\mathbb{E}}\xi_{\omega_{1}}(X),\ldots,\mathbb{\hat{E}}\xi_{\omega_{m}}(X)\right]$. For phase features, we need to compute an additional normalisation term over each frequency as in (\ref{eqn:phase_em}). To obtain the set of frequencies $\{w_i\}^m_{i = 1}$, we can draw samples from a probability measure $\Lambda$ corresponding to an inverse Fourier transform of a shift-invariant kernel, e.g. Gaussian Kernel. However, given a supervised signal, we can also optimise a set of frequencies $\{w_i\}^m_{i = 1}$ that will give us a useful representation and good discriminative performance. In other words, we no longer focus on a specific shift-invariant kernel $k$, but are \emph{learning discriminative Fourier/phase features}. To do this, we can construct a neural network (NN) with special activation functions, pooling layers as shown in Algorithm \ref{alg:nn} and Figure \ref{fig: pnn} in the Appendix.
\section{Asymmetry in Paired Differences}
\label{sec:paired differences}
We now consider a separate approach to nonparametric two-sample test, where we wish to test the null hypothesis that $H_0: P \eqd Q$ vs. the general alternative, but we only have iid samples arising from $X\sim P\star \noisE_1$ and $Y\sim Q\star \noisE_2$. i.e.
\begin{eqnarray*}
X = X_0 + U &
Y= Y_0 + V 
\end{eqnarray*}
where $X_0\sim P$, $Y_0\sim Q$ lie in the space of $\mathcal{P}(\mathbb{R}^d)$ of indecomposable distributions uniquely determined by phase functions and $U$ and $V$ are SPD noise components. With this setting (proof in Appendix \ref{sec:proofs}):
\begin{prop} 
Under the null hypothesis $H_0$, $X - Y \text{ is SPD } \iff X_0 \eqd Y_0.$
\end{prop}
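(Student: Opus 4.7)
The plan is to reduce everything to characteristic functions and use the polar decomposition $\varphi = |\varphi|\rho$, where $\rho$ is the phase function. By independence of $X$ and $Y$, and the further independence of $X_0, U$ and of $Y_0, V$,
\begin{equation*}
\varphi_{X-Y}(\omega) = \varphi_{X}(\omega)\,\overline{\varphi_{Y}(\omega)} = \varphi_{X_0}(\omega)\,\overline{\varphi_{Y_0}(\omega)} \cdot \varphi_{U}(\omega)\,\varphi_{V}(\omega),
\end{equation*}
where I have used that $\varphi_{U}, \varphi_{V}$ are real (by symmetry) and in fact strictly positive (since $U$ and $V$ are SPD). Thus the SPD-ness of $X-Y$ — i.e., the positivity of $\varphi_{X-Y}$ — is equivalent to positivity of $\varphi_{X_0}\overline{\varphi_{Y_0}}$, and the symmetry of the noise is stripped away.

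For the easy direction $(\Leftarrow)$, I assume $X_0 \eqd Y_0$, which gives $\varphi_{X_0} = \varphi_{Y_0}$ and hence $\varphi_{X_0}\overline{\varphi_{Y_0}} = |\varphi_{X_0}|^2 \ge 0$. Multiplying by the strictly positive $\varphi_U \varphi_V$, the product is non-negative and real, and in particular $X - Y$ is symmetric about $0$; strict positivity of $\varphi_{X-Y}$ then holds at every $\omega$ where $\varphi_{X_0}(\omega) \ne 0$.

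For the hard direction $(\Rightarrow)$, assume $X - Y$ is SPD, so $\varphi_{X-Y}(\omega) > 0$ for all $\omega$. Dividing by $\varphi_U(\omega)\varphi_V(\omega) > 0$ gives $\varphi_{X_0}(\omega)\,\overline{\varphi_{Y_0}(\omega)} > 0$ for all $\omega$. Strict positivity first forces both $\varphi_{X_0}$ and $\varphi_{Y_0}$ to be nowhere vanishing, so the phase functions $\rho_{X_0} = \varphi_{X_0}/|\varphi_{X_0}|$ and $\rho_{Y_0} = \varphi_{Y_0}/|\varphi_{Y_0}|$ are well defined globally. The product then reads $|\varphi_{X_0}(\omega)|\,|\varphi_{Y_0}(\omega)|\, \rho_{X_0}(\omega)\,\overline{\rho_{Y_0}(\omega)} > 0$, and since the moduli are positive reals and $\rho_{X_0}\overline{\rho_{Y_0}}$ is a unit complex number, the only way for the expression to be a positive real is $\rho_{X_0}(\omega) = \rho_{Y_0}(\omega)$ for every $\omega$. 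Invoking the hypothesis that $X_0, Y_0 \in \mathcal{P}(\mathbb{R}^d)$, i.e.\ distributions uniquely determined by their phase function, then yields $X_0 \eqd Y_0$.

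The main obstacle is the bookkeeping with zeros of characteristic functions and the strict-vs.-non-strict interpretation of ``positive'' in the definition of SPD. In the $(\Rightarrow)$ direction this is actually a feature rather than a bug: strict positivity of $\varphi_{X-Y}$ automatically rules out vanishing of $\varphi_{X_0}$ and $\varphi_{Y_0}$, so the phase functions are unambiguously defined and comparable everywhere. In the $(\Leftarrow)$ direction one must be slightly more careful, arguing that $\varphi_{X-Y} = |\varphi_{X_0}|^2 \varphi_U \varphi_V$ is non-negative real (hence $X-Y$ symmetric) and appealing to whatever non-vanishing regularity is built into $\mathcal{P}(\mathbb{R}^d)$ — or, equivalently, interpreting the SPD condition via its only genuine content here, namely symmetry of $X-Y$ combined with non-negativity of its characteristic function.
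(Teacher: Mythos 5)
Your proof is correct and follows essentially the same route as the paper's: factor $\varphi_{X-Y}=\varphi_{X_0}\overline{\varphi_{Y_0}}\,\varphi_U\varphi_V$, strip off the strictly positive noise factors, and in the converse direction pass to phase functions of unit modulus to force $\rho_{X_0}=\rho_{Y_0}$ and invoke the defining property of $\mathcal{P}(\mathbb{R}^d)$. If anything you are more careful than the paper about possible zeros of $\varphi_{X_0}$ in the $(\Leftarrow)$ direction, a point the paper glosses over by asserting $|\varphi_{X_0}|^2\varphi_U\varphi_V>0$ outright.
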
 

This motivates us to simply perform a two-sample test on $X -Y$ and $Y - X$ since its rejection would imply rejection of $X_0 \eqd Y_0$, as it tests for symmetry. However, note that this is a test for symmetry only and that for consistency against all alternatives, positivity of characteristic function would need to be checked separately. Now, given two i.i.d. samples $\{X_i\}_{i=1}^{n}$ and $\{Y_i\}_{i=1}^{n}$ with $n$ even, we split the two samples into two halves and compute $W_{i}=X_{i}-Y_{i}$ on one half and $Z_{i}=Y_{i}-X_{i}$ on the other half, and perform a nonparametric two sample test on $W$ and $Z$ (which are, by construction, independent of each other). The advantage of this regime is that we can use any two-sample test -- in particular in this paper, we will focus on the \textit{linear time} mean embedding (ME) test \cite{jitkrittum2016interpretable}, which was found to have performance similar to or better than the original MMD two-sample test \cite{gretton2012kernel}, and explicitly formulates a criterion which maximises the test power. We will refer to the resulting test on paired differences as the Symmetric Mean Embedding (SME). 

Although we have assumed here that $X_0$, $Y_0$ lie in the space $\mathcal{P}(\mathbb{R}^d)$ of indecomposable distributions, in practice, the SME test would not reject if the underlying distributions of interest \textit{differ only in the symmetric components} (or in the SPD components for the PhD test). We argue this to be unlikely due to real life distributions being complex in nature with interesting differences often having a degree of asymmetry. In practice, we recommend the use of the ME and SME or PhD test together to provide an exploratory tool to understand the underlying differences, as demonstrated in the Higgs Data experiment in section \ref{sec: sample}.

It is tempting to also consider learning on distributions with invariances using this formalism. However note that the MMD on paired differences is \emph{not invariant to the additive SPD noise components} under the alternative, i.e. in general
$\text{MMD}(X-Y,Y-X)\neq \text{MMD}(X_0-Y_0,Y_0-X_0).$
This means that the paired differences approach to learning is sensitive to the actual type and scale of the additive SPD noise components, hence not suitable for learning. The mathematical details and empirical experiments to show this are presented in Appendix \ref{sec:paried_append} and \ref{sec:pdexp}.
\section{Experimental Results}
\label{sec:results}
\subsection{Two-Sample Tests with Invariances}
\label{sec: sample}
In this section, we demonstrate the performance of the SME test and the PhD test on both artificial and real-world data for testing the hypothesis $H_0: X_0\eqd Y_0$ based on samples $\{X_i\}_{i=1}^{N}$ from $X_0+U$ and $\{Y_i\}_{i=1}^{N}$ from $Y_0+V$, where $U$ and $V$ are arbitrary SPD noise components (we assume the same number of samples for simplicity). SME test follows the setup in \cite{jitkrittum2016interpretable} but applied to $\{X_i-Y_i\}_{i=1}^{N/2}$ and $\{Y_i-X_i\}_{i=N/2+1}^{N}$. For the PhD test, we use as the test statistic the estimate $\widehat{\text{PhD}}(\hat{P}_{X},\hat{P}_{Y})$ of (\ref{eqn:phd}). It is unclear what the exact form of the null distribution is, so we use a permutation test, by recomputing this statistic on the samples which are first merged and then randomly split in the original proportions.  While we are combining samples with different distributions, the permutation test is still justified since, under the null hypothesis $X_0\eqd Y_0$, the resulting characteristic function $\varphi_{null}$ of the mixture can be written as 
$$\varphi_{null} = \frac 1 2\varphi_{X_0} \varphi_U + \frac 1 2 \varphi_{X_0} \varphi_V = \varphi_{X_0}  (\frac 1 2 \varphi_{U} + \frac 1 2 \varphi_{V} )$$ and since the mixture of the SPD noise terms is also SPD, we have that $\rho_{null} = \rho_{X_0} = \rho_{Y_0}$. For our experiments, we denote by $N$ the sample size, $d$ the dimension of the samples, and we take $\alpha=0.05$ to be the significance level. In the SME test, we take the number of test locations $J$ to be $10$, and use $20\%$ of the samples to optimise the test locations. All experimental results are averaged over $1000$ runs, where each run repeats the simulation or randomly samples without replacement from the dataset.
\begin{figure}[t]
    \centering
    \begin{subfigure}
        \centering
        \includegraphics[width=0.49\textwidth]{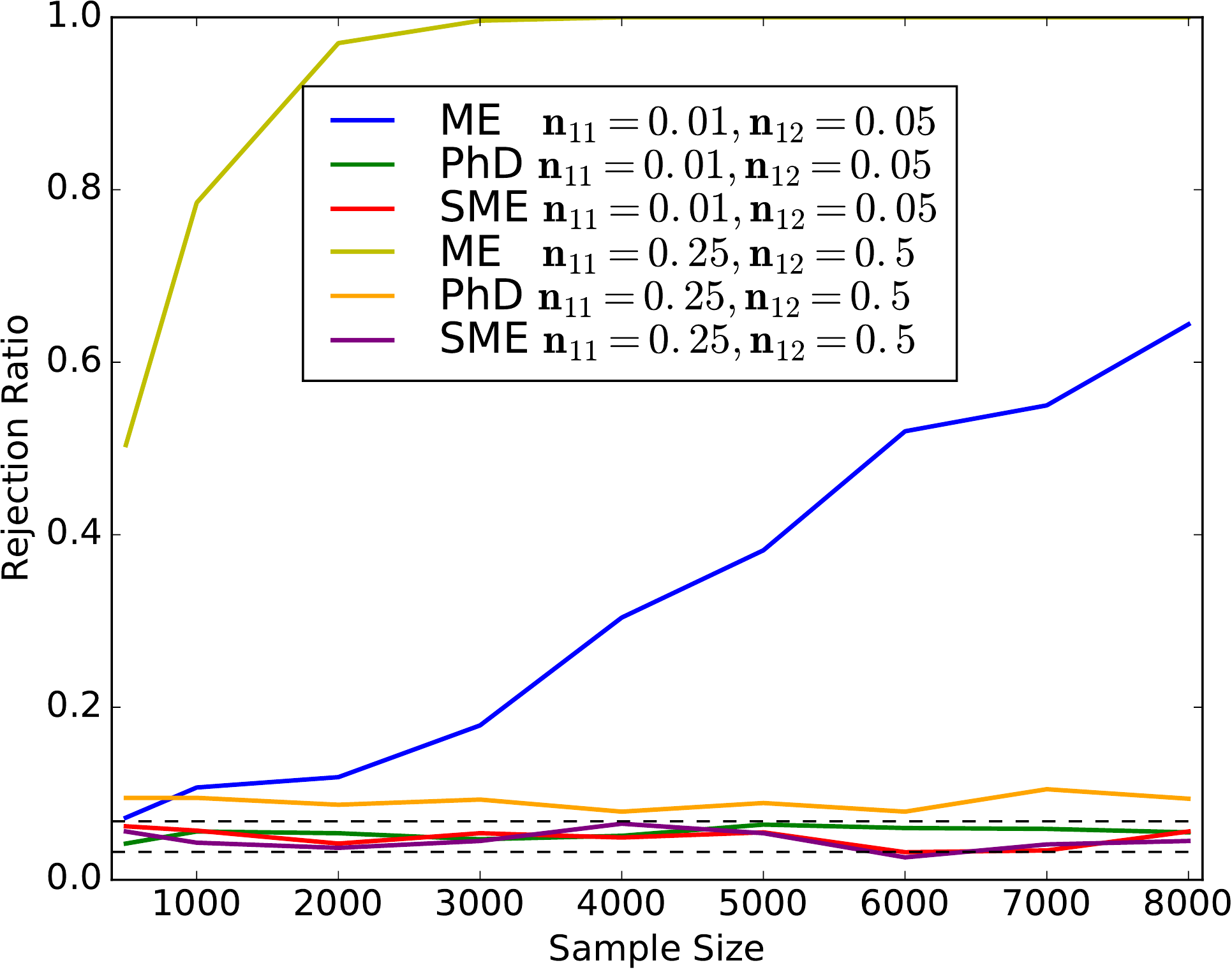}
    \end{subfigure}%
    \hspace{-0.25cm}
    ~ 
    \begin{subfigure}
        \centering
        \includegraphics[width=0.49\textwidth]{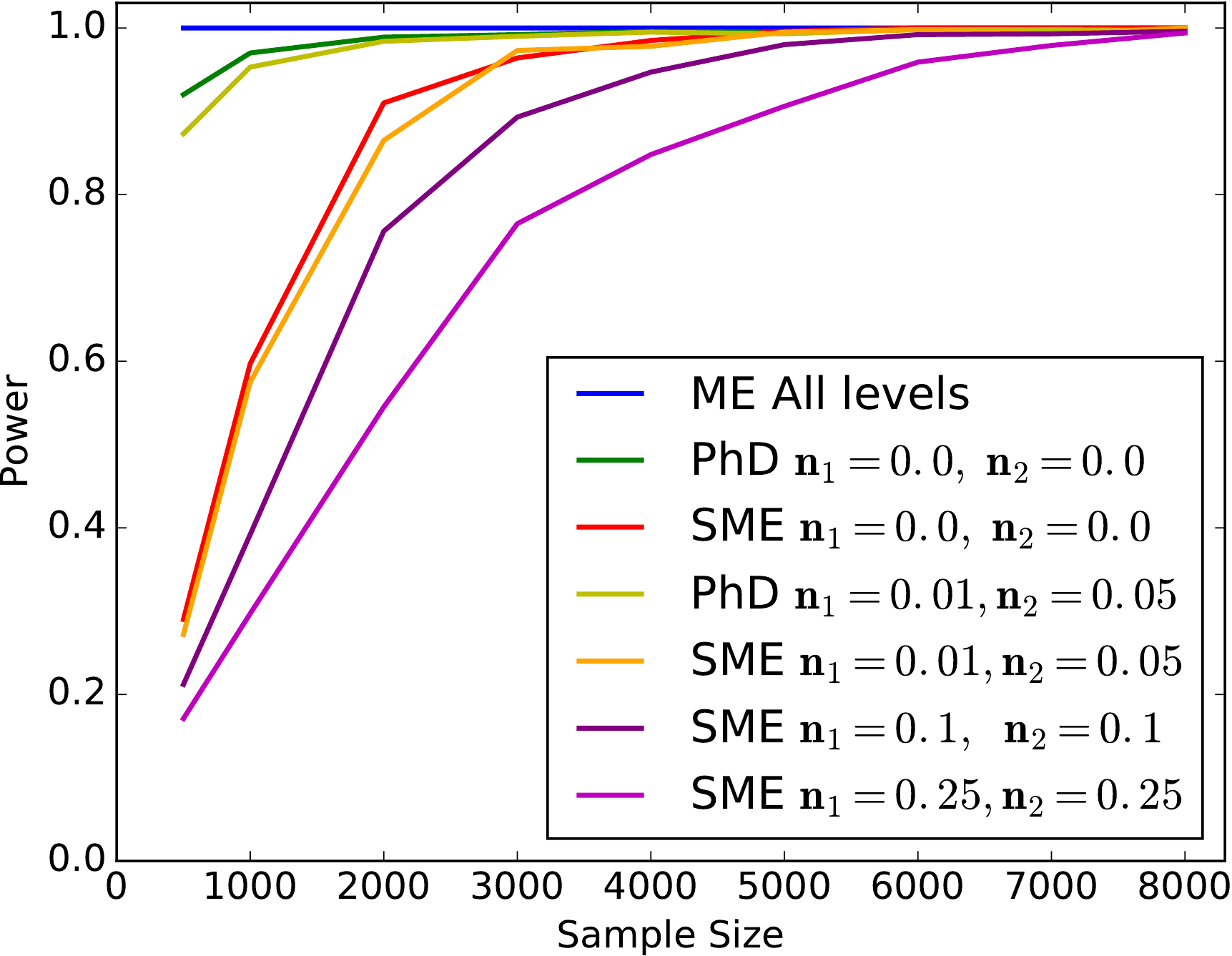}
    \end{subfigure}
    \caption{Type I error and Power under various additional symmetric noise in the synthetic $\chi^2$ dataset. Dashed line is the $99\%$ Wald interval here. \textbf{Left:} Type I error, $n_{11}$ denotes the noise to signal ratio for the first set of samples and $n_{12}$ for the second set. \textbf{Right:} Power, $n_{1}$ denotes the noise to signal ratio for the $X$ set of samples and $n_{2}$ denotes the noise to signal ratio for the $Y$ set of samples.}
\label{fig:chi}
\end{figure}

\subsubsection{Synthetic example: Noisy $\chi^2$}   We start by demonstrating our tests with invariances on a simulated dataset where $X_0$ and $Y_0$ are random vectors with $d=5$, each dimension is the same in distribution and follows $\chi^{2}(4)/4$ and $\chi^{2}(8)/8$ respectively, i.e. chi-squared random variables, with different degrees of freedom, rescaled to have the same mean $1$ (but have different variances, $1/2$ and $1/4$ respectively). An illustration of the true and empirical phase and characteristic function with noise for these two distributions can be found in Appendix \ref{sec:char_phase}. We construct samples $\{X_{n_1,i}\}_{i=1}^N$ and $\{Y_{n_2,i}\}_{i=1}^N$ such that $X_{n_1} \sim X_0 + U$, where $U \sim \mathcal{N}(0,\sigma_1^2 I)$ and similarly $Y_{n_2} \sim Y_0 + V$, where $V \sim \mathcal{N}(0,\sigma_2^2 I)$,  $n_i$ denotes the noise-to-signal ratio given by  the ratio of variances in each dimension, i.e. $n_1 = 2\sigma_1^2$ and $ n_2 = 4\sigma_2^2$. 

We first verify that Type I error is indeed controlled at our design level of $\alpha = 0.05$ \textit{up to various additive SPD noise components}. This is shown in Figure \ref{fig:chi} (left), where $X_0\eqd Y_0$, both constructed using $\chi^{2}(4)/4$, with the noiseless case found in Figure \ref{fig:phd} in the Appendix. It is noted here that the ME test rejects the null hypothesis for even a small difference in noise levels, hence it is unable to let us \textit{target the underlying distributions} we are concerned with. This is unlike the SME test which controls the Type I error even for large differences in noise levels. 
The PhD test, on the other hand, while correctly controlling Type I at small noise levels, was found to have inflated Type I error rates for large noise, with more results and explanation provided in Figure \ref{fig:phd} in the Appendix. 
Namely, the test relies on the invariance to SPD of the population expression of PhD, but the estimator of the null distribution of the corresponding test statistic will in general be affected by the differing noise levels.

Next, we investigate the power, shown in Figure \ref{fig:chi} (right).
For a fair comparison, we have included the PhD test power only for small noise levels, in which the Type I error is controlled at the design level. In these cases, the PhD test has better power than the SME test. This is not surprising, as for the SME we have to halve the sample size in order to construct a valid test. However, recall that the PhD test has an inflated Type I error for large noises, which means that its results should be considered with caution in practice. ME test rejects at all levels at all sample sizes as it picks up all possible differences. SME and PhD are by construction more conservative tests whose rejection provides a much stronger statement: two samples differ even when \textit{all arbitrary additive SPD components} have been stripped off. 

\subsubsection{Higgs Dataset}   The UCI Higgs dataset \cite{baldi2014searching, Lichman:2013} is a dataset with $11$ million observations, where the problem is to distinguish between the signal process where Higgs bosons are found, versus the background process that do not produce Higgs bosons. In particular, we will consider a two-sample test with the ME and SME test on the high level features derived by physicists, as well as a two-sample test on four extremely low level features (azimuthal angular momentum $\phi$ measured by four particle jets in the detector). The high level features here (in $\mathbb{R}^{7}$) have been shown to have good discriminative properties in \cite{baldi2014searching}. Thus, we expect them to have different distributions across two processes. Denoting by $X$ the high level features of the process without Higgs Boson, and $Y$ as the corresponding distribution for the processes where Higgs bosons are produced, we test the null hypothesis that the indecomposable parts of $X$ and $Y$ agree. The results can be found in Table \ref{tab:higgs} in the Appendix, which shows that the high level features differ even up to additive SPD components, with a high power for the SME and ME test even at small sample sizes (rejection rate of $0.94$ at $N=500$). Now we perform the same experiment, but with the low level features $\in \mathbb{R}^4$, commented in \cite{baldi2014searching} to carry very little
discriminating information, using the setup from \cite{chwialkowski2015fast}. 

The results for the ME and SME test can be found in Figure \ref{fig: higgs_low}. Here we observe that while ME test clearly rejects and finds the difference between the two distributions,  there is no evidence that the indecomposable parts of the joint distributions of the angular momentum actually differ. In fact, the test rejection rate remains around the chosen design level of $\alpha = 0.05$ for all sample sizes. This highlights the significance in using the SME test, suggesting that the nature of the difference between the two processes can potentially be explained by some additive symmetric noise components which may be irrelevant for discrimination, providing an insight into the dataset. Furthermore, this also highlights the argument that given two samples from complex data collection and generation processes, a nonparametric two sample test like ME will likely reject given sufficient sample sizes, even if the discovered difference may not be of interest. With the SME test however, we can ask a much more subtle question about the differences between the assumed true underlying processes. Figures showing that the Type I error is controlled at the design level of $\alpha = 0.05$ for both low and high level features can be found in Figure \ref{fig:higgs_type1} in the Appendix.
\subsection{Learning with Phase Features}
\label{sec:learn}
\begin{figure}[t]
\centering
\begin{minipage}{.475\textwidth}
  \centering
\includegraphics[scale=0.36]{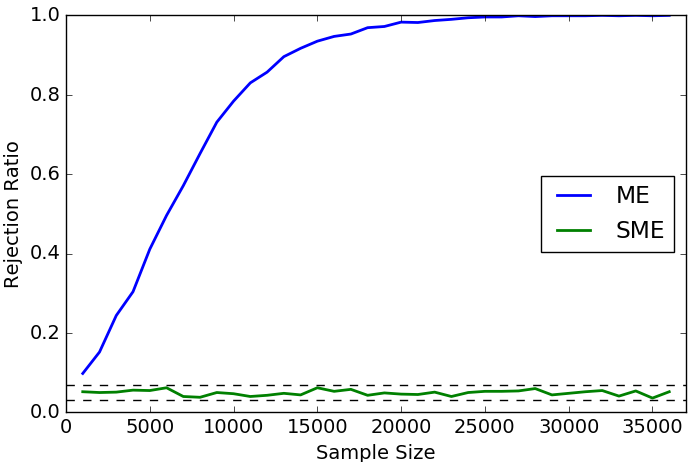}
  \vspace{0.5cm}
  \captionof{figure}{Rejection ratio vs. sample size for extremely low level features for Higgs dataset. Dashed line is the $99\%$ Wald interval for 1000 repetitions for $\alpha =0.05$. Note PhD is not used here, due to its expensive computational cost.}
  \label{fig: higgs_low}
\end{minipage}%
\hspace{0.2cm}
\begin{minipage}{.475\textwidth}
  \includegraphics[scale = 0.245]{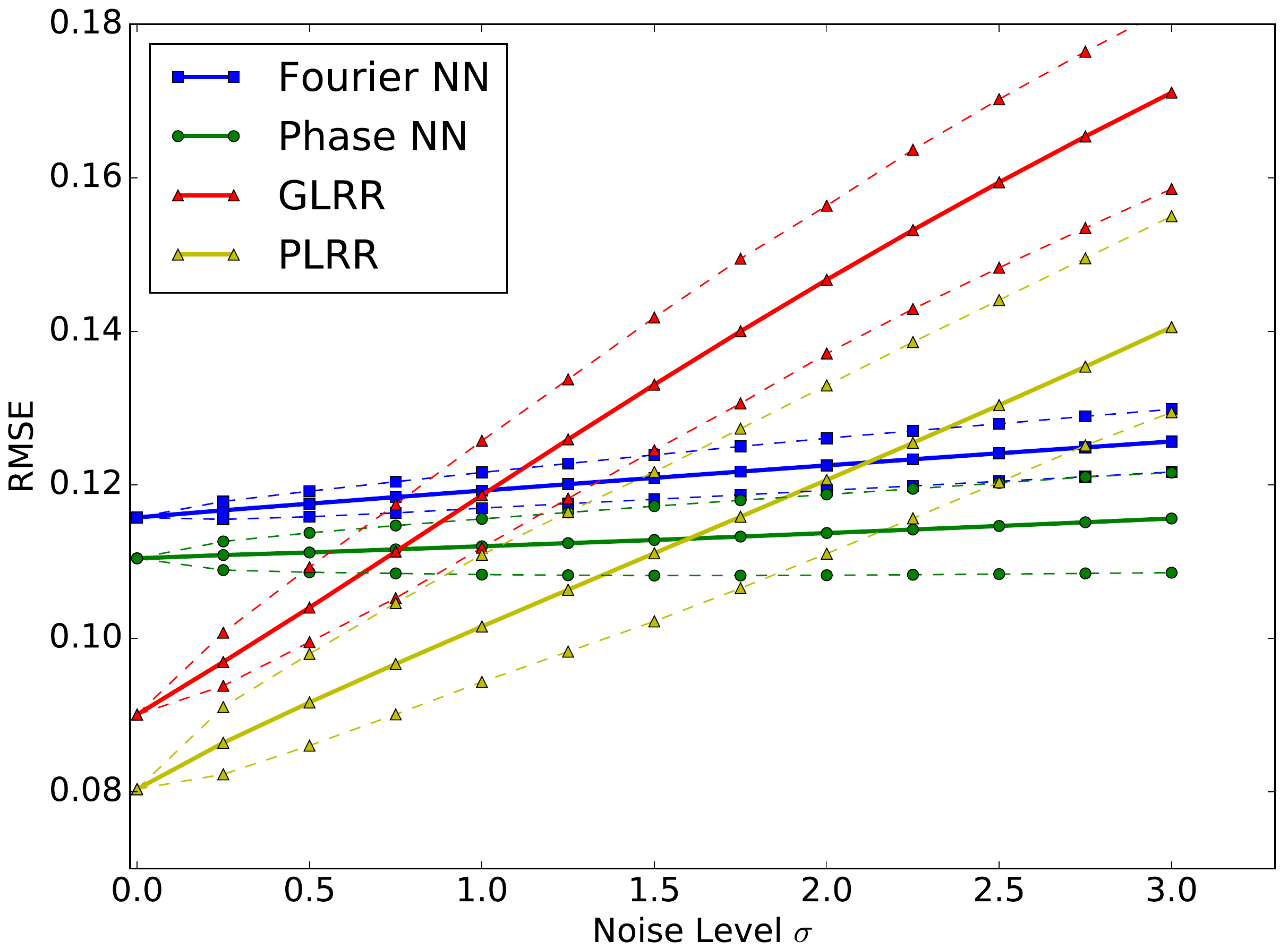}
  \vspace{0.3cm}
  \captionof{figure}{RMSE on the Aerosol test set, corrupted by various levels of noise averaged over $100$ runs, with the $5^{th}$ and the $95^{th}$ percentile. The noiseless case is shown with one run. RMSE from mean is $0.206$.}
  \label{fig:phasegkrrrff}
\end{minipage}
\end{figure}
\subsubsection{Aerosol Dataset}   To demonstrate the phase features invariance to SPD noise component, we use the Aerosol MISR1 dataset also studied by \cite{Szabo2015} and \cite{Wang2012} and consider a situation with \textit{covariate shift} \cite{Quinonero2009} on distribution inputs: the testing data is impaired by additive SPD components different to that in the training data. Here, we have an aerosol optical depth (AOD) multi-instance learning problem with $800$ bags, where each bag contains $100$ randomly selected multispectral (potentially cloudy) pixels within $20$km radius around an AOD sensor. The label $y_i$ for each bag is given by the AOD sensor measurements and each sample $x_i$ is $16$-dimensional. This can be understood as a distribution regression problem where each bag is treated as a set of samples from some distribution. 

We use $640$ bags for training and $160$ bags for testing. Here in the bags for testing \textit{only}, we add varying levels of Gaussian noise $\epsilon \sim \mathcal{N}(0, Z)$ to each bag, where $Z$ is a diagonal matrix with diagonal components $z_i \sim U[0, \sigma v_i]$ with $v_i$ being the empirical variance in dimension $i$ across all samples, accounting for different scales across dimensions. For comparisons, we consider linear ridge regression on embeddings with respect to a Gaussian kernel, approximated with RFF (GLRR) as described in section \ref{sec:background} (i.e. a linear kernel is applied on approximate embeddings), linear ridge regression on phase features (PLRR) (i.e. normalisation step is applied to obtain \eqref{eqn:phase_em}), and also the phase and Fourier neural networks (NN), described in Appendix \ref{sec:NNappendix}, tuning all hyperparameters with 3-fold cross validation. With the same model, we now measure Root Mean Square Error (RMSE) $100$ times with various noise-corrupted test sets and results are shown in figure \ref{fig:phasegkrrrff}. It is also noted that a second level non-linear kernel $\tilde{K}$ does not improve performance significantly on this problem \cite{Szabo2015}. 

We see that GLRR and PLRR are competitive (see Appendix Table \ref{tab:aerosol}) in the noiseless case, and these clearly outperform both the Fourier NN and Phase NN (likely due to the small size of the dataset). For increasing noise, the performance of GLRR degrades significantly, and while the performance of PLRR degrades also, the model is much more robust under additional SPD noise. In comparison, the Phase NN implementation is almost insensitive to covariate shift in the test sets, unlike the performance of PLRR, highlighting the importance of learning discriminative frequencies $w$ in a very low signal-to-noise setting.

It is noted that the Fourier NN performs similarly to that of the Phase NN on this example. Interestingly, discriminative frequencies learnt on the training data correspond to Fourier features that are nearly normalised (i.e. they are close to unit norm - see Figure \ref{fig:freq_aero} in the Appendix). This means that the Fourier NN has \emph{learned to be approximately invariant} based on training data, indicating that the original Aerosol data potentially has irrelevant SPD noise components. This is reinforced by the nature of the dataset (each bag contains $100$ randomly selected potentially cloudy pixels, known to be noisy \cite{Wang2012}) and no loss of performance from going from GLRR to PLRR. The results highlights that phase features are stable under additive SPD noise.

\subsubsection{Dark Matter Dataset}   
\begin{figure}[t]
\centering
\begin{minipage}{.475\textwidth}
  \centering
 \captionof{table}{Mean Square Error (MSE) on dark matter dataset for $500$ runs with $5^{th}$ and $95^{th}$ percentile.}
 \vspace{0.2cm}
\label{tab:dark}
\begin{normalsize}
\begin{tabular}{lccr}
    \toprule
  Algorithm  & MSE \\
           \midrule
  Mean & $0.16$ \\
    PLRR &  $\mathbf{0.021} \ (0.018, 0.024)$ \\
     GLRR & $0.033 \ (0.030, 0.037 ) $\\
     \midrule
       LGRR &   $0.032 \ (0.028, 0.036)$ \\
     PGRR & $0.021 \ (0.017, 0.024)$\\
      GGRR &$ \mathbf{0.018}  \ (0.015, 0.019)$  \\
    \bottomrule
\end{tabular}
\end{normalsize}
  \end{minipage}%
\begin{minipage}{.475\textwidth}
\centering
  \includegraphics[scale = 0.30]{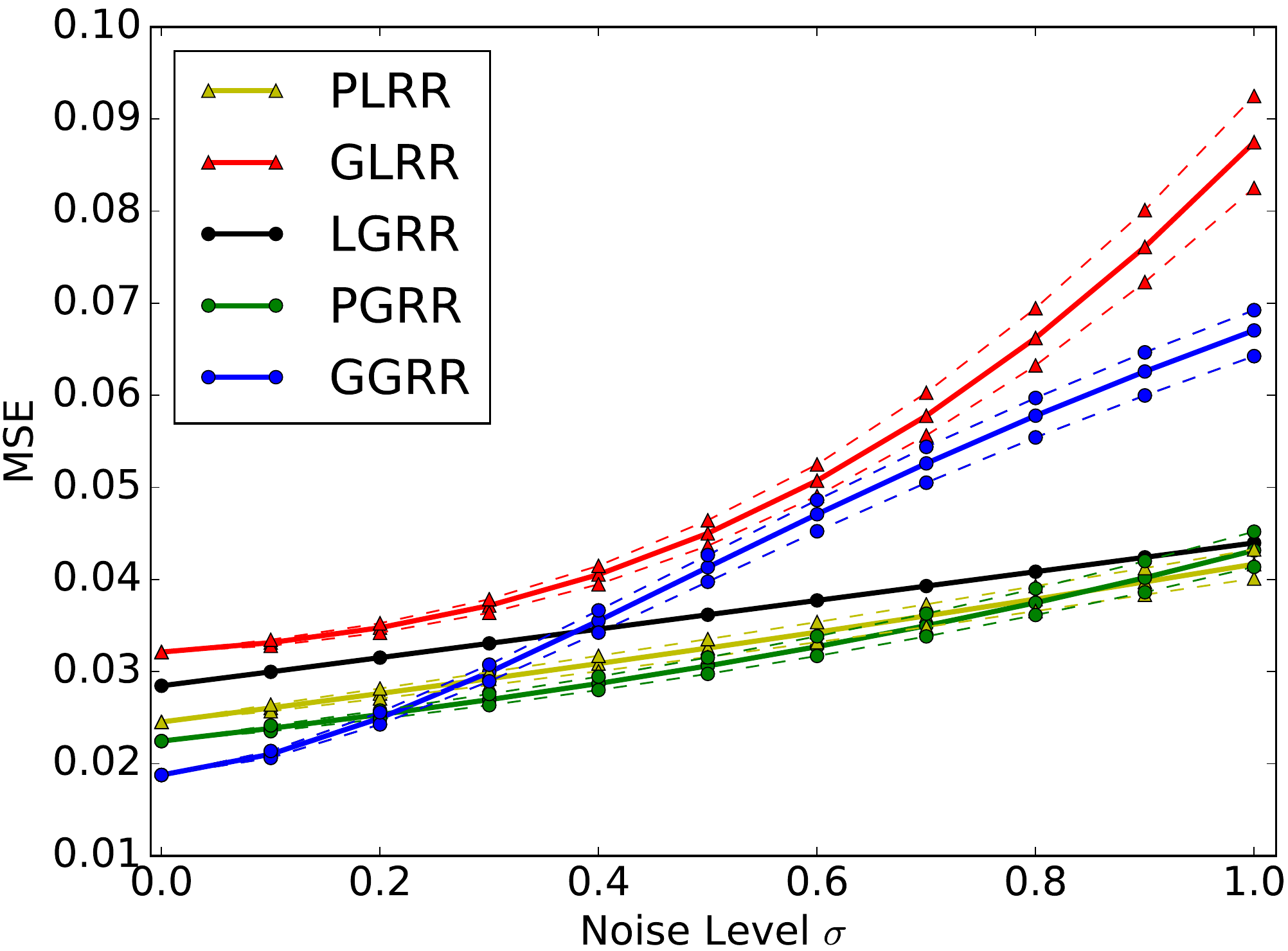}
  \vspace{0.2cm}
  \captionof{figure}{MSE with various levels of noise added on test set, with $5^{th}$ and $95^{th}$ percentile.}
  \label{fig:astro_shift}
\end{minipage}
\vspace{0cm}
\end{figure}
We now study the use of phase features on the dark matter dataset, composing of a catalog of galaxy clusters. In this setting, we would like to predict the total mass of galaxy clusters, using the dispersion of velocities in the direction along our line of sight. In particular, we will use the `ML1' dataset, as obtained from the authors of \cite{Ntampaka2015,Ntampaka2016}, who 
constructed a catalog of massive halos from the MultiDark \texttt{mdpl} simulation \cite{multidark}. The dataset contains $5028$ bags, with each sample consisting of its sub-object velocity and its mass label in $\mathbb{R}$. By viewing each galaxy cluster at multiple lines of sights, we obtain $15\,000$ bags, using the same experimental setup as in \cite{law2017bdr}. For experiments, we use approximately $9000$ bags for training, and $3000$ bags each for validation and testing, keeping those of multiple lines of sight in the same set. As before, we use GLRR and PLRR and we also include in comparisons methods with a second level Gaussian kernel (with RFF) applied to phase features (PGRR) and to approximate embeddings (GGRR). For a baseline, we also include a first level linear kernel (equivalent to representing each bag with its mean), before applying a second level gaussian kernel (LGRR). We use the same set of randomly sampled frequencies across the methods, tuning for the scale of the frequencies and for regularisation parameters. 

Table \ref{tab:dark} shows the results of the methods across $10$ different data splits, with $50$ sets of randomised frequencies for each data split. We see that PLRR is significantly better than GLRR. This suggests that under this model structure, by removing SPD components from each bag, we can target the underlying signal and obtain superior performance, highlighting the applicability of phase features. Considering a second level gaussian kernel, we see that the GGRR has a slight advantage over PGRR, with PGRR performing similar to PLRR. This suggests that the SPD components of the distribution of sub-object velocity may be useful for predicting the mass of a galaxy cluster if an additional nonlinearity is applied to embeddings -- whereas the benefits of removing them outweigh the signal present in them without this additional nonlinearity. 
To show that indeed the phase features are robust to SPD components, we perform the same covariate shift experiment as in the aerosol dataset, with results given in Figure \ref{fig:astro_shift}. Note that LGRR is robust to noise, as each bag is represented by its mean.

\section{Conclusion}
\label{sec: conclu}
No dataset is immune from measurement noise and often this noise differs across different data generation and collection processes. When measuring distances between distributions, can we disentangle the differences in noise from the differences in the signal? We considered two different ways to encode invariances to additive symmetric noise in those distances, each with different strengths: a nonparametric measure of asymmetry in paired sample differences and a weighted distance between the empirical phase functions. The former was used to construct a hypothesis test on whether the difference between the two generating processes can be explained away by the difference in postulated noise, whereas the latter allowed us to introduce a flexible framework for invariant feature construction and learning algorithms on distribution inputs which are robust to measurement noise and target underlying signal distributions.
\section*{Acknowledgements}
We thank Dougal Sutherland for suggesting the use of of the dark matter dataset, Michelle Ntampaka for providing the catalog, as well as Ricardo Silva, Hyunjik Kim and Kaspar Martens for useful discussions. This work was supported by the EPSRC and MRC through the OxWaSP CDT programme (EP/L016710/1). C.Y. and H.C.L.L. also acknowledge the support of the MRC Grant No. MR/L001411/1. \\ \\ The CosmoSim database used in this paper is a service by the Leibniz-Institute for Astrophysics Potsdam (AIP). The MultiDark database was developed in cooperation with the Spanish MultiDark Consolider Project CSD2009-00064. The authors gratefully acknowledge the Gauss Centre for Supercomputing e.V. (www.gauss-centre.eu) and the Partnership for Advanced Supercomputing in Europe (PRACE, www.prace-ri.eu) for funding the MultiDark simulation project by providing computing time on the GCS Supercomputer SuperMUC at Leibniz Supercomputing Centre (LRZ, www.lrz.de).
\clearpage
\bibliographystyle{plain2}
\bibliography{phase_icml}
\renewcommand{\thefigure}{\Alph{section}.\arabic{figure}}
\renewcommand{\thetable}{\Alph{section}.\arabic{table}}
\renewcommand{\thealgorithm}{\Alph{section}.\arabic{algorithm}}
\newpage

\appendix
\section{Different Indecomposable Distributions Can Coincide in Phase}
\setcounter{figure}{0}    
\setcounter{table}{0}    
\setcounter{algorithm}{0}
\label{sec:indecom}
Let $X$ and $Y$ be (univariate) random variables with densities
$$f_X(x)=\frac{1}{\sqrt{2\pi}}x^2\exp(-x^2/2),\quad f_Y(x)=\frac{1}{2}|x|\exp(-|x|).$$
Then it can be directly checked that their characteristic functions are given by
$$\varphi_X(\omega)=(1-\omega^2)\exp(-\omega^2/2),\quad \varphi_Y(\omega)=\frac{1-\omega^2}{(1+\omega^2)^2}.$$
Thus, the phase functions coincide and are equal to
$$
\rho_X(\omega)=\rho_Y(\omega)=
\begin{cases}
+1,|\omega|<1,\\
-1,|\omega|>1,\\
\text{undefined},\omega\in\{-1,1\}.\\
\end{cases}
$$
However, it is can also checked that even though they are symmetric, $X$ and $Y$ are indecomposable, cf. e.g. \cite{linnik1977decomposition}, which use a related but distinct notion of indecomposability of random variables. The plots of the densities and characteristic functions of $X$ and $Y$ are given in Fig. \ref{fig:counter}.
\begin{figure}[!ht]
\centering
\includegraphics[width=0.4\textwidth]{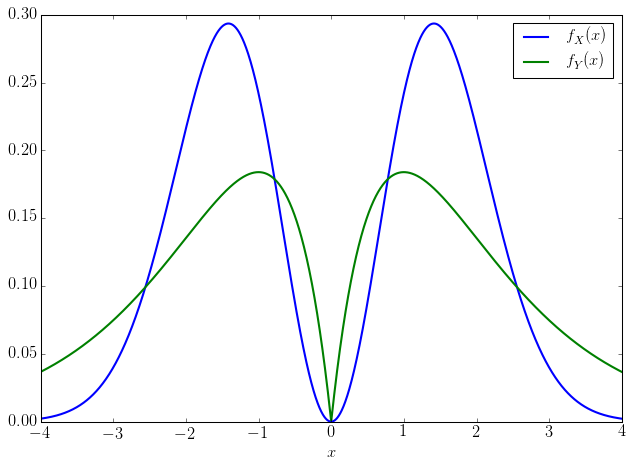}\includegraphics[width=0.4\textwidth]{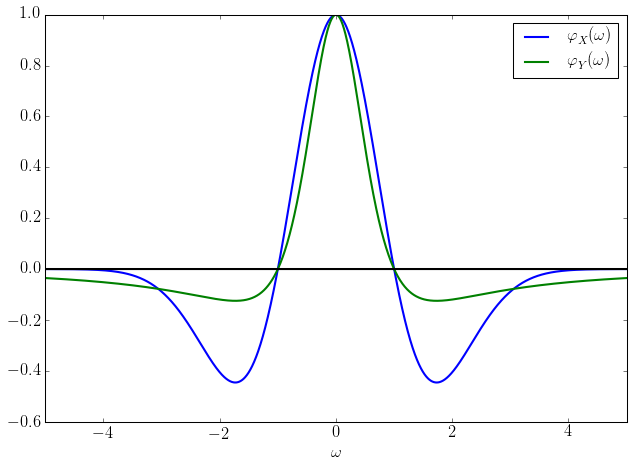}
\caption{Example of two indecomposable distributions which have the same phase function. {\bf Left}: densities. {\bf Right}: charactersitic functions.}
\label{fig:counter}
\end{figure}

\section{Phase Discrepancy and Asymmetry in Paired Differences Proofs}
\setcounter{figure}{0}    
\setcounter{table}{0}    
\setcounter{algorithm}{0}
\label{sec:proofs}
In this section, we will provide further details of the definitions, calculations and proofs in section \ref{sec:main} and \ref{sec:paired differences}. Phase discrepancy is defined as the weighted $L_{2}$-distances between the phase functions,
i.e.
\[
\text{PhD}(X,Y)=\int\left|\rho_{X}\left(\omega\right)-\rho_{Y}\left(\omega\right)\right|^{2}d\Lambda\left(\omega\right),
\]
for some positive measure $\Lambda$ (w.l.o.g. a probability measure).
Phase discrepancy measures how much $X$ and $Y$ differ up to an independent SPD noise component. We note that while the form of the PhD is motivated by that of MMD (weighted $L_{2}$-distances between the characteristic functions), relating it to the properties of the corresponding kernel and its RKHS is not straightforward. For example, constructing a PhD interpretation as a supremum over the RKHS unit ball (which is often how MMD is introduced) is immediate only for the case where indecomposable parts are point masses. Namely, if $X=x_0+U$ and $Y=y_0+V$, i.e. indecomposable parts are almost surely constant vectors $x_0$ and $y_0$, then
$$
PhD(X,Y)=\Vert k(\cdot,x_0)-k(\cdot,y_0) \Vert_{\mathcal H_k}^2 = \sup_{\Vert f \Vert_{\mathcal H_k}\leq 1}\left|f(x_0)-f(y_0)\right|^2,
$$
by virtue of $\rho_X(\omega)=e^{i\omega^\top x_0}=\varphi_{x_0}(\omega)$. In other cases, while it is clear that the spectral properties of the kernel still regulate the amount of frequency content that is used, one obtains the RKHS distance between the kernel convolutions of the inverse Fourier transforms of the phase functions so the interpretation is less clear.

Below, we provide the proofs of the propositions from the main text.
\begin{prop}
$$
\text{PhD}(X,Y)=2-2\int\frac{\mathbb{E}\cos\left(\omega^{\top}\left(X-Y\right)\right)}{\sqrt{\mathbb{E}\cos\left(\omega^{\top}\left(X-X'\right)\right)\mathbb{E}\cos\left(\omega^{\top}\left(Y-Y'\right)\right)}}d\Lambda(\omega).
$$
\end{prop}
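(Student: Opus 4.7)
The plan is to start from the definition $\mathrm{PhD}(X,Y)=\int |\rho_X(\omega)-\rho_Y(\omega)|^2\,d\Lambda(\omega)$ and manipulate the integrand pointwise in $\omega$. Since $|\rho_X(\omega)|=|\rho_Y(\omega)|=1$ by construction, I first expand
$$|\rho_X(\omega)-\rho_Y(\omega)|^2 = |\rho_X(\omega)|^2+|\rho_Y(\omega)|^2 - 2\mathrm{Re}\bigl(\rho_X(\omega)\overline{\rho_Y(\omega)}\bigr) = 2 - 2\mathrm{Re}\bigl(\rho_X(\omega)\overline{\rho_Y(\omega)}\bigr).$$
Integrating against $\Lambda$ (a probability measure) gives the leading $2$ in the claim, so the whole task reduces to identifying the cross-term $\mathrm{Re}(\rho_X\overline{\rho_Y})$ with the ratio in the statement.

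Next I would substitute $\rho_X = \varphi_X/|\varphi_X|$ and $\rho_Y = \varphi_Y/|\varphi_Y|$ to obtain
$$\mathrm{Re}\bigl(\rho_X(\omega)\overline{\rho_Y(\omega)}\bigr) = \frac{\mathrm{Re}\bigl(\varphi_X(\omega)\overline{\varphi_Y(\omega)}\bigr)}{|\varphi_X(\omega)||\varphi_Y(\omega)|}.$$
For the numerator, using independence of $X$ and $Y$, I have $\varphi_X(\omega)\overline{\varphi_Y(\omega)} = \mathbb{E}[e^{i\omega^\top X}]\mathbb{E}[e^{-i\omega^\top Y}] = \mathbb{E}[e^{i\omega^\top(X-Y)}]$, whose real part is exactly $\mathbb{E}\cos(\omega^\top(X-Y))$.

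For the denominator, I introduce independent copies $X'\stackrel{d}{=}X$ and $Y'\stackrel{d}{=}Y$ (independent of $X,Y$). Then $|\varphi_X(\omega)|^2 = \varphi_X(\omega)\overline{\varphi_X(\omega)} = \mathbb{E}[e^{i\omega^\top(X-X')}]$, which is real (by symmetry of $X-X'$) and equals $\mathbb{E}\cos(\omega^\top(X-X'))$; similarly for $Y$. Hence $|\varphi_X(\omega)||\varphi_Y(\omega)| = \sqrt{\mathbb{E}\cos(\omega^\top(X-X'))\,\mathbb{E}\cos(\omega^\top(Y-Y'))}$. Assembling the pieces and integrating against $d\Lambda(\omega)$ yields the claimed identity.

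This proof is essentially algebraic and I do not foresee a serious obstacle; the only small care needed is checking that the denominator $|\varphi_X(\omega)||\varphi_Y(\omega)|$ is nonzero $\Lambda$-a.e., which is guaranteed when $X$ and $Y$ are decomposable with SPD noise components (their characteristic functions then inherit strict positivity from that of the SPD factor), and that the symmetrisation trick $|\varphi_X|^2=\mathbb{E}\cos(\omega^\top(X-X'))$ is being applied on genuinely independent copies. Consistency with Proposition~1 can also be checked as a sanity step, since $\mathbb{E}\xi_\omega(X)=[\mathbb{E}\cos(\omega^\top X),\mathbb{E}\sin(\omega^\top X)]^\top$ has squared norm $|\varphi_X(\omega)|^2=\mathbb{E}\cos(\omega^\top(X-X'))$ and inner product $\mathbb{E}\xi_\omega(X)^\top \mathbb{E}\xi_\omega(Y) = \mathbb{E}\cos(\omega^\top(X-Y))$ by the same independence argument.
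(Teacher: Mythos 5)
Your proof is correct and follows essentially the same route as the paper's: expand the squared modulus using $|\rho_X|=|\rho_Y|=1$, reduce the cross-term to $\mathrm{Re}(\varphi_X\overline{\varphi_Y})/(|\varphi_X||\varphi_Y|)$, and identify numerator and denominator with $\mathbb{E}\cos(\omega^\top(X-Y))$ and $\sqrt{\mathbb{E}\cos(\omega^\top(X-X'))\,\mathbb{E}\cos(\omega^\top(Y-Y'))}$ via independent copies. The only cosmetic difference is that the paper packages the cross-term as the characteristic function of an equal mixture $Z$ of $X-Y$ and $Y-X$ before taking real parts, which amounts to the same computation.
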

\begin{proof}
\begin{eqnarray*}
\text{PhD}(X,Y) & = & \int\left|\rho_{X}\left(\omega\right)-\rho_{Y}\left(\omega\right)\right|^{2}d\Lambda\left(\omega\right)\\
 & = & \int\left|\rho_{X}\left(\omega\right)\right|^{2}d\Lambda\left(\omega\right)+\int\left|\rho_{Y}\left(\omega\right)\right|^{2}d\Lambda\left(\omega\right) -\int\left(\rho_{X}\overline{\rho_{Y}}+\overline{\rho_{X}}\rho_{Y}\right)d\Lambda\\
 & = & 2-\int\frac{\varphi_{X}\overline{\varphi_{Y}}+\overline{\varphi_{X}}\varphi_{Y}}{\left|\varphi_{X}\right|\left|\varphi_{Y}\right|}d\Lambda\\
 & = & 2-2\int\frac{\varphi_{Z}}{\sqrt{\varphi_{X-X'}\varphi_{Y-Y'}}}d\Lambda,
\end{eqnarray*}
where $X$ and $X'$ are iid, $Y$ and $Y'$ are iid and $Z$ is an
equal mixture of $X-Y$ and $Y-X$. Indeed, 
\[
\varphi_{X}\overline{\varphi_{Y}}+\overline{\varphi_{X}}\varphi_{Y}=\varphi_{X-Y}+\varphi_{Y-X}=2\varphi_{Z},
\]
and
\[
\varphi_{X-X'}=\varphi_{X}\overline{\varphi_{X}}=\left|\varphi_{X}\right|^{2}.
\]
Note that $X-X'$,$Y-Y'$ and $Z$ are all symmetric. Thus, 
\begin{eqnarray*}
\varphi_{Z}(\omega) &= &\mathbb{E}\left[\cos\left(\omega^{\top}Z\right)\right]=\frac{1}{2}\mathbb{E}\left[\cos\left(\omega^{\top}\left(X-Y\right)\right)\right]+\frac{1}{2}\mathbb{E}\left[\cos\left(\omega^{\top}\left(Y-X\right)\right)\right] \\ &= &\mathbb{E}\left[\cos\left(\omega^{\top}\left(X-Y\right)\right)\right].
\end{eqnarray*}
Substituting provides us the result.
\end{proof}
\begin{prop}
\label{sec: pos_kernel}
$K_{\omega}\left(\mathsf{P}_{X},\mathsf{P}_{Y}\right)=\left(\frac{\mathbb{E}\xi_{\omega}(X)}{\left\Vert \mathbb{E}\xi_{\omega}(X)\right\Vert }\right)^{\top}\left(\frac{\mathbb{E}\xi_{\omega}(Y)}{\left\Vert \mathbb{E}\xi_{\omega}(Y)\right\Vert }\right)$
is a positive definite kernel on probability measures $\forall \omega$, where here $\xi_{\omega}\left(x\right)=\left[\cos\left(\omega^{\top}x\right),\sin\left(\omega^{\top}x\right)\right]$, and so is $K\left(\mathsf{P}_{X},\mathsf{P}_{Y}\right)=\int K_\omega\left(\mathsf{P}_{X},\mathsf{P}_{Y}\right)d\Lambda(\omega)$ for any positive measure $\Lambda$.
\end{prop}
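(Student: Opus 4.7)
The plan is to establish positive definiteness frequency-by-frequency and then invoke closure of the class of positive definite kernels under non-negative integration. Concretely, for a fixed $\omega$ consider the map $\Phi_\omega \colon P_X \mapsto \frac{\mathbb{E}\xi_\omega(X)}{\left\Vert\mathbb{E}\xi_\omega(X)\right\Vert} \in \mathbb{R}^2$. Since $\left\Vert\mathbb{E}\xi_\omega(X)\right\Vert = |\varphi_X(\omega)|$, this feature map is well-defined whenever $\varphi_X(\omega) \neq 0$; on the class of measures under consideration (e.g.\ indecomposable ones, whose characteristic functions are essentially the phase times a strictly positive factor) this poses no issue, and in general one may restrict to the $\Lambda$-almost-everywhere set where $\varphi$ does not vanish. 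With $\Phi_\omega$ in hand, we have by definition $K_\omega(P_X, P_Y) = \langle \Phi_\omega(P_X), \Phi_\omega(P_Y) \rangle_{\mathbb{R}^2}$.

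Now I would show the pointwise positive definiteness: for any $n \geq 1$, any probability measures $P_1, \ldots, P_n$, and any scalars $c_1, \ldots, c_n \in \mathbb{R}$,
\begin{equation*}
\sum_{i,j=1}^n c_i c_j K_\omega(P_i, P_j) = \left\Vert \sum_{i=1}^n c_i \Phi_\omega(P_i) \right\Vert^2 \geq 0.
\end{equation*}
This is the standard observation that any kernel realised as an inner product of a feature map into a Hilbert space is positive definite.

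Next I would integrate. For the same choice of $\{P_i\}$ and $\{c_i\}$, the Gram matrix of $K$ satisfies
\begin{equation*}
\sum_{i,j=1}^n c_i c_j K(P_i, P_j) = \int \sum_{i,j=1}^n c_i c_j K_\omega(P_i, P_j) \, d\Lambda(\omega) = \int \left\Vert \sum_{i=1}^n c_i \Phi_\omega(P_i) \right\Vert^2 d\Lambda(\omega) \geq 0,
\end{equation*}
where the interchange of the finite sum and the integral is justified by Tonelli's theorem, as the integrand is non-negative (and bounded by $(\sum_i |c_i|)^2$ since $\left\Vert\Phi_\omega(P_i)\right\Vert = 1$, so the integral is finite whenever $\Lambda$ is finite).

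I do not anticipate any genuine obstacle here: the only subtlety is the well-definedness of the normalised feature map on the zero set of the characteristic functions, which is handled by restricting attention to the measures for which the phase function is defined $\Lambda$-a.e.\ (as already implicitly assumed in the definition of $\mathcal{P}(\mathbb{R}^d)$ and the PhD). The rest is a direct application of the feature-map criterion for positive definiteness together with the elementary fact that a positive-measure mixture of positive definite kernels remains positive definite.
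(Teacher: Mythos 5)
Your proof is correct and follows essentially the same route as the paper: the paper first exhibits the unnormalised quantity $\kappa_\omega(\mathsf{P}_X,\mathsf{P}_Y)=\mathbb{E}\cos(\omega^\top(X-Y))=(\mathbb{E}\xi_\omega(X))^\top\mathbb{E}\xi_\omega(Y)$ as an inner product of mean embeddings and then invokes the fact that normalisation preserves positive definiteness, which is just your direct observation that $K_\omega$ is the inner product of unit-normalised feature vectors; the integration step is handled identically in both. Your additional remark on well-definedness where $\varphi_X(\omega)=0$ is a sensible point that the paper leaves implicit.
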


\begin{proof}
Define a feature map $\xi_{\omega}:\mathbb{\mathcal{X}\to\mathbb{R}}^{2}$
with $\xi_{\omega}\left(x\right)=\left[\cos\left(\omega^{\top}x\right),\sin\left(\omega^{\top}x\right)\right]$,
which induces a kernel on $\mathcal{X}$ given by $k_{\omega}(x,y)=\cos\left(\omega^{\top}\left(x-y\right)\right)$.
Then $\kappa_{\omega}\left(\mathsf{P}_{X},\mathsf{P}_{Y}\right)=\mathbb{E}\cos\left(\omega^{\top}\left(X-Y\right)\right)=\mathbb{E}k_{\omega}(X,Y)=\left(\mathbb{E}\xi_{\omega}(X)\right)^{\top}\mathbb{E}\xi_{\omega}(Y)$
is a valid kernel on probability measures and so is the normalised
kernel 
\[
K_{\omega}\left(\mathsf{P}_{X},\mathsf{P}_{Y}\right)=\frac{\kappa_{\omega}\left(\mathsf{P}_{X},\mathsf{P}_{Y}\right)}{\sqrt{\kappa_{\omega}\left(\mathsf{P}_{X},\mathsf{P}_{X}\right)\kappa_{\omega}\left(\mathsf{P}_{Y},\mathsf{P}_{Y}\right)}}=\left(\frac{\mathbb{E}\xi_{\omega}(X)}{\left\Vert \mathbb{E}\xi_{\omega}(X)\right\Vert }\right)^{\top}\left(\frac{\mathbb{E}\xi_{\omega}(Y)}{\left\Vert \mathbb{E}\xi_{\omega}(Y)\right\Vert }\right),
\]
where we used that $\mathbb{E}\cos\left(\omega^{\top}\left(X-X'\right)\right)=\left(\mathbb{E}\xi_{\omega}(X)\right)^{\top}\mathbb{E}\xi_{\omega}(X')=\left\Vert \mathbb{E}\xi_{\omega}(X)\right\Vert ^{2}$. For the last claim, simply note that integrating through the positive measure preserves positive semidefinitess, i.e. $\sum\alpha_{i}\alpha_{j}K(\mathsf{P}_{i},\mathsf{P}_{j})=\int\left(\sum\alpha_{i}\alpha_{j}K_{\omega}(\mathsf{P}_{i},\mathsf{P}_{j})\right)d\Lambda\left(\omega\right)\geq0$.
\end{proof}
As a direct corollary,
\begin{prop}
\textup{$\text{PhD}(X,Y)=2-2K\left(\mathsf{P}_{X},\mathsf{P}_{Y}\right)=2\int\left(1-\left(\frac{\mathbb{E}\xi_{\omega}(X)}{\left\Vert \mathbb{E}\xi_{\omega}(X)\right\Vert }\right)^{\top}\left(\frac{\mathbb{E}\xi_{\omega}(Y)}{\left\Vert \mathbb{E}\xi_{\omega}(Y)\right\Vert }\right)\right)d\Lambda(\omega).$}
\end{prop}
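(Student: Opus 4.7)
The plan is to combine the two preceding propositions, which is exactly why the statement is advertised as a direct corollary. The first proposition rewrote $\text{PhD}(X,Y)$ as
$$\text{PhD}(X,Y)=2-2\int\frac{\mathbb{E}\cos\left(\omega^{\top}\left(X-Y\right)\right)}{\sqrt{\mathbb{E}\cos\left(\omega^{\top}\left(X-X'\right)\right)\mathbb{E}\cos\left(\omega^{\top}\left(Y-Y'\right)\right)}}d\Lambda(\omega),$$
while Proposition~\ref{sec: pos_kernel} identified the integrand as exactly $K_\omega(\mathsf{P}_X,\mathsf{P}_Y)$ in the normalised feature form. So the whole proof is essentially a substitution followed by factoring constants.

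First, I would recall the identities already used in the proof of Proposition~\ref{sec: pos_kernel}: $\kappa_\omega(\mathsf{P}_X,\mathsf{P}_Y)=\mathbb{E}\cos(\omega^\top(X-Y))=(\mathbb{E}\xi_\omega(X))^\top \mathbb{E}\xi_\omega(Y)$, and $\kappa_\omega(\mathsf{P}_X,\mathsf{P}_X)=\|\mathbb{E}\xi_\omega(X)\|^2$. These give
$$\frac{\mathbb{E}\cos(\omega^\top(X-Y))}{\sqrt{\mathbb{E}\cos(\omega^\top(X-X'))\mathbb{E}\cos(\omega^\top(Y-Y'))}}=\left(\frac{\mathbb{E}\xi_{\omega}(X)}{\left\Vert \mathbb{E}\xi_{\omega}(X)\right\Vert }\right)^{\top}\left(\frac{\mathbb{E}\xi_{\omega}(Y)}{\left\Vert \mathbb{E}\xi_{\omega}(Y)\right\Vert }\right)=K_\omega(\mathsf{P}_X,\mathsf{P}_Y)$$
pointwise in $\omega$ (wherever the denominators do not vanish; on a $\Lambda$-null set these can be set to any convention). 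Plugging into the first proposition yields $\text{PhD}(X,Y)=2-2\int K_\omega(\mathsf{P}_X,\mathsf{P}_Y)\,d\Lambda(\omega)=2-2K(\mathsf{P}_X,\mathsf{P}_Y)$, which is the middle equality in the statement.

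Finally, I would rewrite $2-2K(\mathsf{P}_X,\mathsf{P}_Y)$ as $2\int(1-K_\omega(\mathsf{P}_X,\mathsf{P}_Y))\,d\Lambda(\omega)$, using that $\Lambda$ is a probability measure so $\int d\Lambda=1$; this uses only linearity of the integral and produces the right-hand expression in the proposition. There is no real obstacle here, because all the analytic content (the rewriting of the numerator and denominator of $\overline{\varphi_X}\varphi_Y/(|\varphi_X||\varphi_Y|)$ into cosine expectations, and the positive definiteness of the kernel) has been absorbed into the two preceding propositions. The only minor point to mention is that both preceding results implicitly assume $\mathbb{E}\xi_\omega(X)\neq 0$ and $\mathbb{E}\xi_\omega(Y)\neq 0$ $\Lambda$-a.e., which is the natural domain on which the phase function, and hence the integrand, is defined.
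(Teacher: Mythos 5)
Your proposal is correct and matches the paper exactly: the paper presents this statement as a direct corollary of the two preceding propositions with no further argument, and your substitution of the normalised-feature identity for the integrand, followed by using $\int d\Lambda=1$ to fold the constant into the integral, is precisely the intended derivation. Your remark about the $\Lambda$-null set where $\mathbb{E}\xi_\omega(X)$ or $\mathbb{E}\xi_\omega(Y)$ vanishes is a reasonable extra caveat that the paper leaves implicit.
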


\begin{prop} 
Under the null hypothesis, $X - Y \text{ is SPD } \iff X_0 \eqd Y_0.$
\end{prop}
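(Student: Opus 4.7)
The plan is to prove both directions by reducing the SPD property of $X-Y$ to an equality of phase functions of the indecomposable parts, exploiting that $X$ and $Y$ are independent two-sample inputs and that $U, V$ are independent SPD noise components. First I would compute the characteristic function of $X-Y$ by independence:
\[
\varphi_{X-Y}(\omega) \;=\; \varphi_X(\omega)\overline{\varphi_Y(\omega)} \;=\; \varphi_{X_0}(\omega)\,\overline{\varphi_{Y_0}(\omega)}\,\varphi_U(\omega)\,\varphi_V(\omega).
\]
Because $U$ and $V$ are SPD, $\varphi_U(\omega),\varphi_V(\omega)$ are strictly positive real numbers, so they do not affect whether $\varphi_{X-Y}(\omega)$ is positive or whether it is real; they act as a strictly positive real multiplier. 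Hence $X-Y$ is SPD if and only if the product $\varphi_{X_0}(\omega)\,\overline{\varphi_{Y_0}(\omega)}$ is strictly positive for all $\omega$.

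For the $(\Leftarrow)$ direction, if $X_0\eqd Y_0$ then $\varphi_{X_0}=\varphi_{Y_0}$ and the product becomes $|\varphi_{X_0}(\omega)|^2\varphi_U(\omega)\varphi_V(\omega)$, which is non-negative and, on the set where $\varphi_{X_0}$ does not vanish, strictly positive; as the paper restricts attention to $\mathcal{P}(\mathbb{R}^d)$ where phase functions determine the distribution, I would simply note that the factorisation shows $X-Y$ satisfies the SPD property on the support where phase functions are defined (and in the main cases of interest $\varphi_{X_0}$ is nowhere vanishing, which I would state explicitly). For the $(\Rightarrow)$ direction, $\varphi_{X_0}(\omega)\overline{\varphi_{Y_0}(\omega)}>0$ means the complex numbers $\varphi_{X_0}(\omega)$ and $\varphi_{Y_0}(\omega)$ have the same argument at every $\omega$ (where nonzero), which is exactly the statement $\rho_{X_0}(\omega)=\rho_{Y_0}(\omega)$. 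Since both $X_0$ and $Y_0$ lie in $\mathcal{P}(\mathbb{R}^d)$, the subset of indecomposable distributions uniquely determined by their phase functions, this equality of phases forces $X_0\eqd Y_0$.

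The only delicate point will be bookkeeping around frequencies where $\varphi_{X_0}$ or $\varphi_{Y_0}$ may vanish, so that the phase function is technically undefined there; I would handle this by appealing to the definition of $\mathcal{P}(\mathbb{R}^d)$ (which hypothesises that phase functions, interpreted on their natural domain of definition, uniquely pin down the measure) and noting that since $\varphi_U\varphi_V$ is strictly positive, the locations of any zeros of $\varphi_{X-Y}$ coincide exactly with those of $\varphi_{X_0}\overline{\varphi_{Y_0}}$, so no structural information is lost in the reduction. Modulo that technicality, the proof is a direct character-function calculation, and the main conceptual step is simply recognising that stripping off the positive real factors $\varphi_U\varphi_V$ reduces the SPD test on the noisy difference to a phase-function comparison on the clean indecomposable components.
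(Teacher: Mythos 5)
Your proposal is correct and follows essentially the same route as the paper's own proof: factor $\varphi_{X-Y}=\varphi_{X_0}\overline{\varphi_{Y_0}}\varphi_U\varphi_V$, strip off the strictly positive factors $\varphi_U\varphi_V$, observe that positivity of the remaining product is equivalent to $\rho_{X_0}=\rho_{Y_0}$, and invoke the definition of $\mathcal{P}(\mathbb{R}^d)$ to conclude $X_0\eqd Y_0$. Your extra care about frequencies where $\varphi_{X_0}$ or $\varphi_{Y_0}$ vanishes is a reasonable refinement (the paper's forward direction silently assumes $|\varphi_{X_0}|^2\varphi_U\varphi_V>0$ everywhere), but it does not change the argument's structure.
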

\begin{proof}
Under $H_0$, since $X_0$ has the same distribution as $Y_0$, then so do $X-Y=X_{0}-Y_{0}+U-V$ and $Y-X=Y_{0}-X_{0}+V-U$ as $U-V$ is symmetric. Moreover, $\varphi_{X-Y}=|\varphi_{X_0}|^2\varphi_U\varphi_V>0$, so $X-Y$ is SPD. Conversely, if we assume that $X-Y$ is SPD, i.e. $\varphi_X \overline{\varphi_Y} > 0$, then $\rho_{X_0} \overline{\rho_{Y_0}} > 0$. Since $ | \rho_{X_0}|= |\rho_{Y_0} |= 1$, this implies that $\rho_{X_0} = \rho_{Y_0}$, and hence $X_0\eqd Y_0$, since we assumed that $X_0$ and $Y_0$ belong to $\mathcal{P}(\mathbb{R}^d)$. Hence, we have that $X - Y \text{ is SPD } \iff X_0 \eqd Y_0.$ 
\end{proof}

\section{Paired Differences}
\setcounter{figure}{0}    
\setcounter{table}{0}    
\setcounter{algorithm}{0}
\label{sec:paried_append}
Another way to measure asymmetry of the difference between random vectors $X$ and $Y$ is to use $\text{MMD}(X-Y,Y-X)$ instead
of $\text{PhD}(X,Y)$. However, this quantity is not invariant, i.e., $\text{MMD}(X-Y,Y-X)\neq\text{MMD}(X_{0}-Y_{0},Y_{0}-X_{0})$,
and in fact the values will heavily depend on the distributions of
$U$ and $V$. We note that 
\begin{eqnarray*}
\varphi_{X-Y}(\omega)-\varphi_{Y-X}(\omega) & = & 2i\mathbb{E}\sin\left(\omega^{\top}\left(X-Y\right)\right),
\end{eqnarray*}
so that we are effectively measuring the size of the imaginary part
of the characteristic function of $X-Y$ (which should not be there
if it is symmetric). There are several different ways in which we can write this quantity:

\begin{eqnarray*}
\text{MMD}(X-Y,Y-X) & = & \left\Vert \mathbb{E}k(\cdot,X-Y)-\mathbb{E}k(\cdot,Y-X)\right\Vert _{\mathcal{H}_{k}}^{2}\\
 & = & \int\left|\varphi_{X}\overline{\varphi_{Y}}-\overline{\varphi_{X}}\varphi_{Y}\right|^{2}d\Lambda\\
 & = & 4\int\left[\mathbb{E}\sin\left(\omega^{\top}\left(X-Y\right)\right)\right]^{2}d\Lambda(\omega)\\
 & = & \int\left|\varphi_{X}\right|^{2}\left|\varphi_{Y}\right|^{2}\left(2-\frac{\varphi_{X}\overline{\varphi_{Y}}}{\overline{\varphi_{X}}\varphi_{Y}}-\frac{\overline{\varphi_{X}}\varphi_{Y}}{\varphi_{X}\overline{\varphi_{Y}}}\right)d\Lambda.
\end{eqnarray*}
The last expression indicates that this quantity is affected by the amplitude of the individual
characteristic functions, experimental details to show this empirically can be found in \ref{sec:pdexp}. Moreover, the quantity does not appear to lend itself to the \emph{feature on distributions} formalism, i.e. we were unable to derive some Hilbert space features $\Upsilon(\mathsf P)\in \mathcal H$ such that $\text{MMD}(X-Y,Y-X)=\Vert \Upsilon(\mathsf P_{X})-\Upsilon(\mathsf P_{Y})\Vert_{\mathcal H}^2$, and it is thus unclear whether this approach can be used to define a valid kernel on distributions. 
\section{Learning Discriminative Features}
\setcounter{figure}{0}    
\setcounter{table}{0}    
\setcounter{algorithm}{0}    
\label{sec:NNappendix}
\begin{figure}[H]
\centering
\begin{minipage}{.57\textwidth}
\centering
\begin{algorithm}[H]
   \caption{Phase/Fourier Neural Network}
   \label{alg:nn}
\begin{algorithmic}
   \STATE {\bfseries Input:} Batch of bag of samples $X \in \mathbb{R}^{b \times N \times p}$, where $b$ is the batch size, $N$ is the bag size and $p$ is the dimension
      \STATE{ \bfseries Output:} Classification or Regression Output
      \STATE {\bfseries 1.} Compute $f(X) = X W$ where $W \in \mathbb{R}^{p \times m}$
      \STATE{\bfseries 2.} Apply a $\sin$ and $\cos$ activation function $$l_1(X) = [ \sin(f(X)) \cos(f(X)) ]$$
       \STATE  {\bfseries 3.} Apply mean pooling operation over $N$, effectively computing $\hat{\mathbb{E}}\xi_{\omega_{i}}(X)$ for each $\omega_{i} \in \mathbb{R}^p$  $$\scalebox{1.0}{$ l_2(X)= \left[\hat{\mathbb{E}}\xi_{\omega_{1}}(X),\ldots,\mathbb{\hat{E}}\xi_{\omega_{m}}(X)\right] \in \mathbb{R}^{2m}$}$$
       
\STATE { \bfseries 4.} For Phase Neural Network, compute $\left\Vert \mathbb{\hat{E}}\xi_{\omega_{1}}(X)\right\Vert$ for each frequency and normalise to obtain:
$$ \scalebox{1.0}{ $l_3(X)= \left[\frac{\hat{\mathbb{E}}\xi_{\omega_{1}}(X)}{\left\Vert \mathbb{\hat{E}}\xi_{\omega_{1}}(X)\right\Vert },\ldots,\frac{\mathbb{\hat{E}}\xi_{\omega_{m}}(X)}{\left\Vert \mathbb{\hat{E}}\xi_{\omega_{m}}(X)\right\Vert }\right]$}$$ 
\STATE{\bfseries 5.} Batch Normalisation Layer
\STATE{\bfseries 6.} Output layer
     \end{algorithmic}
\end{algorithm}
  \end{minipage}%
  \hspace{0.2cm}
\begin{minipage}{.40\textwidth}
\centering
  \includegraphics[scale = 0.164]{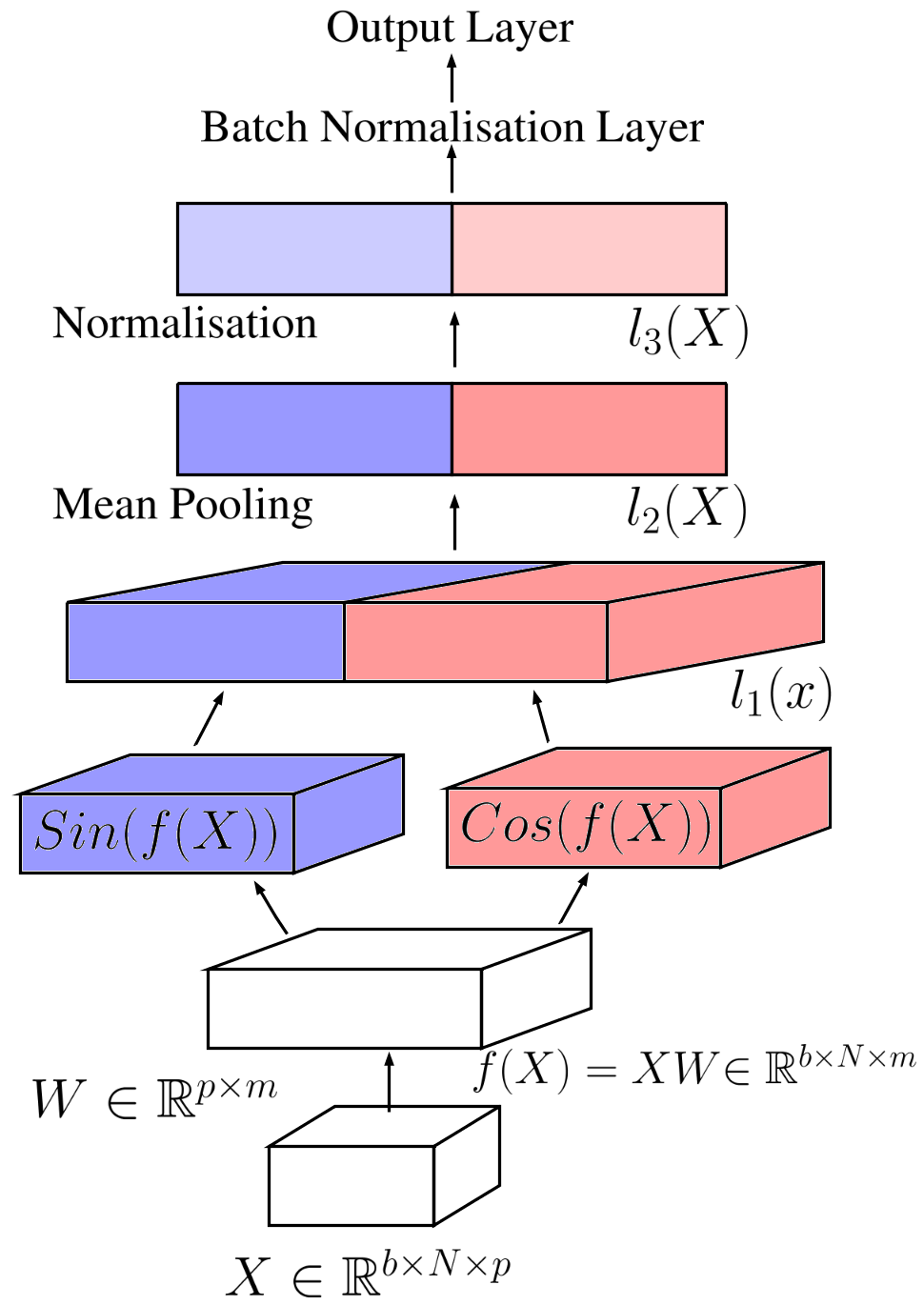}
  \captionof{figure}{Main structure of the phase neural network}
  \label{fig: pnn}
\end{minipage}
\end{figure}
Algorithm \ref{alg:nn} shows the phase Neural Network (phase NN) and the Fourier Neural Network (Fourier NN), where the latter can be obtained by simply removing step $4$ in the algorithm. Although the batch normalisation is not required, it is highly recommended for faster training of the network \cite{ioffe2015batch}, due to the normalisation for the phase neural network in step $5$ of the algorithm. 
Because of the neural network structure, we can take advantage of the rich literature, as well as alter the network in order to target a variety of different problems. For example, setting now the loss function as the squared loss, cross entropy or pinball loss, we can solve tasks in regression, classification or quantile regression on distributional inputs with discriminative frequencies. The Fourier neural network can also be extended to inputs in $\mathbb{R}^p$ for normal regression and classification problems by removing the mean pooling operation in step $3$ of the algorithm.
\section{Distribution Regression with Invariance for ABC}
\setcounter{figure}{0}    
\setcounter{table}{0}    
\setcounter{algorithm}{0}

\begin{algorithm}[H]
   \caption{Phase Regression, Fourier Regression}
   \label{alg:reg}
\begin{algorithmic}
   \STATE {\bfseries Input:} prior $\pi$ for $\theta$, data-generating process $P$, phase or Fourier features
      \STATE {\bfseries Output:} Phase or Fourier Regression Neural Network 
   \FOR{$i = 1, \dots, n$}
   \STATE Sample $\theta_i \sim \pi$
   \STATE Sample dataset $B_i =\{x_{ij}\}_{j =1}^{N}$ from $P( \cdot | \theta_i)$
      \ENDFOR
     \STATE Train Phase or Fourier neural network with $\{B_i, y_i\}^n_{i=1}$
\end{algorithmic}
\end{algorithm}

\begin{algorithm}[H]
   \caption{ Phase-ABC or Fourier-ABC}
   \label{alg:abc}
\begin{algorithmic}
   \STATE {\bfseries Input:} prior $\pi$ for $\theta$, data-generating process $P$, observed data $ B^\ast = \{x^\ast _j\}_{j=1}^{N^\ast}, \epsilon$, number of particles $K$
      \STATE{ \bfseries Output:} Weighted Posterior sample $\sum_k w_k \delta_{\theta_k}$
      \STATE {\bfseries 1.} Perform Phase or Fourier Regression, obtain $m(\cdot)$
      \STATE{\bfseries 2.} ABC
   \FOR{$k = 1, \dots, K$}
   \STATE Sample $\theta_k \sim \pi$
   \STATE Sample dataset $B_k =\{x_{kj}\}_j$ from $P( \cdot | \theta_k)$
   \STATE Compute $\widetilde{w}_k = \exp \left( - \dfrac{|| m(B_k) - m(B^\ast) ||_2^2}{\epsilon} \right)  $
      \ENDFOR
     \STATE $w_k = {\widetilde{w}_k}/{\sum_k \widetilde{w}_k} $
     \end{algorithmic}
\end{algorithm}
\label{abc}
We have designed an explicit feature map for a bag of samples that can be used for any distribution regression problem. We now present its potential application to Approximate Bayesian Computation (ABC). Motivated by the approach of \cite{fearnhead2012constructing} and \cite{mitrovic2016dr}, we propose to use the phase features to construct an optimal summary statistic (under some loss function) for ABC. ABC is a Bayesian framework that allows us to approximate the posterior distribution of some parameter $\theta$ by approximating the likelihood function through simulations. To capture this approximation of the likelihood function, simulated datasets from the model are compared with the observed data using some lower dimensional summary statistics. If the summary statistic is sufficient, then there is no loss of information when projecting the data onto lower dimensional space. In practice however, sufficient statistics are not available for complex models of interest and instead using the strategy of \cite{fearnhead2012constructing}, one can construct summary statistics that provide inference of $\theta$ which is optimal with respect to a given loss function. \\
\\
In particular, we will focus on the squared loss function as given by $L(\theta, \theta') = (\theta - \theta')^2$. \cite{fearnhead2012constructing} showed that under this loss, the posterior mean of the $\theta$ given observations $\bf X$ is in fact the optimal summary statistic of $\bf X$ for the ABC procedure. However, since this quantity can not be analytically computed, one approach is to estimate it by fitting a regression model from simulated data, some examples of this include the semi-automatic ABC \cite{fearnhead2012constructing} and DR-ABC \cite{mitrovic2016dr}. Here we focus on ideas from DR-ABC, which uses a kernel distribution regression approach, treating each simulated dataset (given $\theta$ simulated from the prior) as a bag of samples and taking its label to be $\theta$. After training the regression model, it proceeds to using it as a summary statistic as given in algorithm \ref{alg:abc}. The DR-ABC paper further proposed the conditional DR-ABC (CDR-ABC), which makes the assumption that only certain aspects of the data have an influence on $\theta$. By conditioning on such nuisance variables and then using conditional distribution regression (by embedding conditional distributions \cite{song2013kernel}), it can better account for the functional relationship inside the model. However, one problem with this approach is that the nuisance variables have to be observed directly, even for the true dataset, which may often not be the case. For example, consider the hierarchical model we used to illustrate the utility of phase features for regression below.
\begin{eqnarray}
\label{hiera_app}
\theta \sim \Gamma(\alpha, \beta), \quad Z &\sim&  U[0,\sigma], \quad \epsilon  \sim  \mathcal{N}(0, Z),\nonumber\\
\hspace*{-0.0cm}  X &\sim& \frac{\Gamma\big(\theta/2 , 1/2 \big)}{ 2 \theta}+ \epsilon, 
\end{eqnarray}
for some fixed values of $\alpha, \beta$ and $\sigma$. Here $\theta$ is the parameter we are interested in, $\epsilon$ is a latent noise variable (unobserved) and $X$ is the observation. Since neither $\epsilon$ nor $Z$ are observed on the true dataset, we can only use DR-ABC, not CDR-ABC. But DR-ABC then does not take into account the model structure which tells us that $\epsilon$ is irrelevant for inferring $\theta$, and it is thus likely to give poor performance for large values of $\sigma$. Hence, we propose to use phase features inside such regression model, which will be invariant to the noise variable $\epsilon$ which is an SPD component in observations. By using phase features for distribution regression, we should be able to better capture the functional relationship between $\theta$ and its corresponding dataset, a bag from $X|\theta$ and hence build better summary statistics for ABC. In some sense, this approach can be thought of as implicitly conditioning out the latent nuisance variable $\epsilon$, similarly as CDR-ABC does when it is observed. Furthermore, although we have chosen this example as an illustration, the phase features could be applied to many complex models with nuisance latent variables, even when we cannot write their contribution explicitly as here. The algorithms \ref{alg:reg} and \ref{alg:abc} shows the approach as in DR-ABC, but now replaced by our phase or Fourier regression approaches to compute summary statistics, and we denote these as Phase-ABC and Fourier-ABC. 
 \section{Additional Results}
 \setcounter{figure}{0}    
\setcounter{table}{0}    
\setcounter{algorithm}{0}
\subsection{Asymmetry in Paired Differences Experiment}

\label{sec:pdexp}
\begin{figure}[ht!]
\centering
\includegraphics[scale = 0.25]{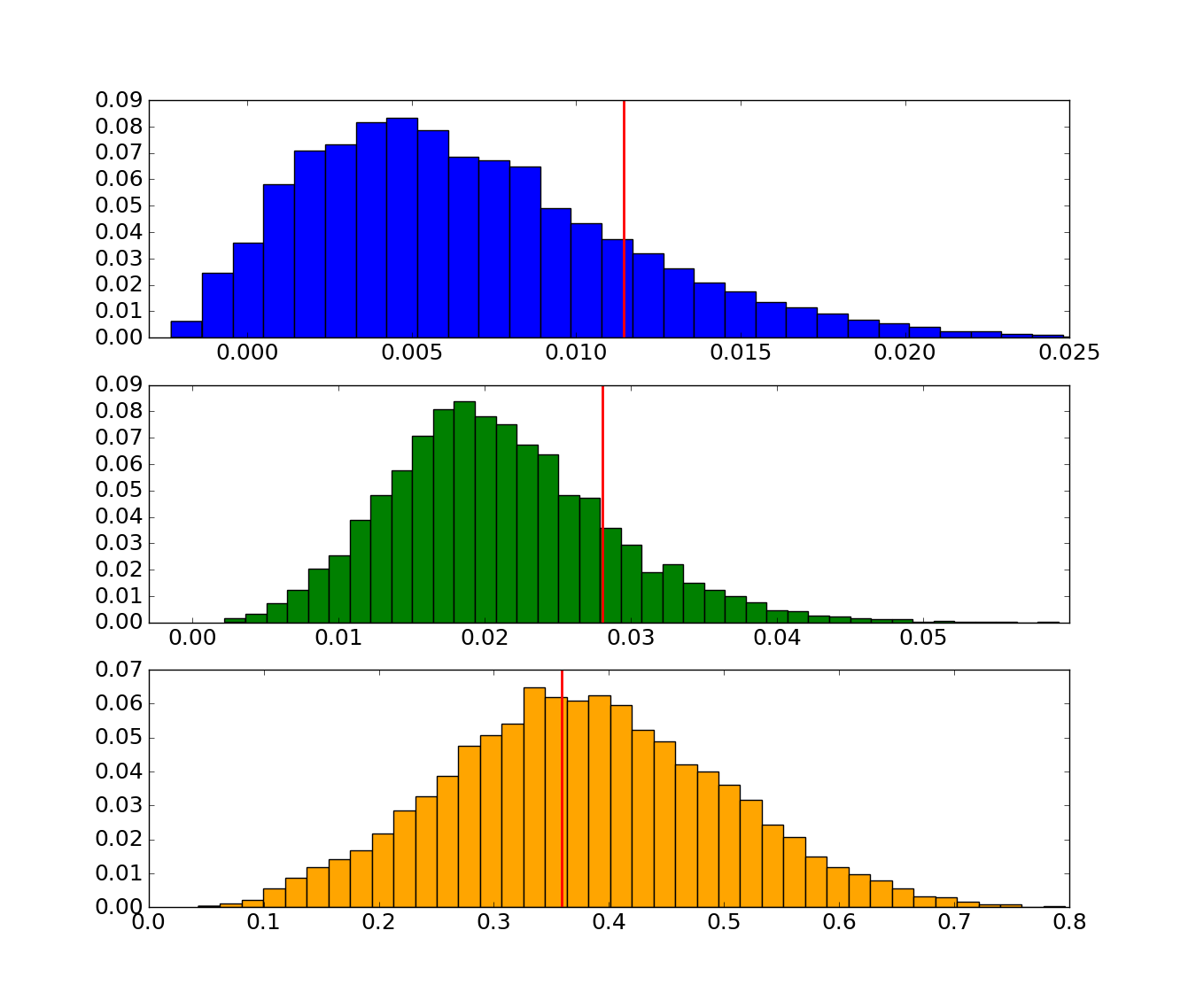}
\caption{ Histograms on various estimates for all pairs of bags with varying additive noise, red line denotes the noiseless case. \textbf{Top:} Estimated MMD on paired differences for all pair of bags, the red line given by the mean of the estimated MMD on paired differences for bags without noise. \textbf{Middle:} the squared distance between Fourier features (an estimate of MMD). \textbf{Bottom:} the squared distance between phase features (an estimate of PhD).}
\label{fig: mmdpd}
\end{figure}
\vspace{0.4cm}
While it performed well when testing the null hypothesis, the MMD on paired differences is not invariant to the additive SPD noise components under the alternative hypothesis. Using the synthetic experimental setup as before, we simulate $100$ noiseless bags from the two scaled $\chi^2$-distributions  $X_0\sim \chi^{2}(4)/4$ and $Y_0\sim \chi^{2}(8)/8$, where each bag contains $1000$ samples. We add varying levels of Gaussian noise to each bag, i.e. the bags are of the form $X_i = X_{0} +  \mathcal{N}(0,Z_i)$ and $Y_i = Y_{0} +  \mathcal{N}(0,W_i)$, where $Z_i,W_i \sim U[0,0.1]$. We compute the estimate of the MMD on paired differences, the squared distance between Fourier features (an estimate of MMD) and the squared distance between phase features (an estimate of PhD) for all pairs of bags. In all computations, we used the same set of frequencies $\{ w_i \}^{100}_{i=1}$ (sampled from a Gaussian distribution). We do the same for the noiseless samples (or use analytic expressions where available). The results are shown in figure \ref{fig: mmdpd}. We see that the MMD on paired differences is not invariant to SPD noise components (clearly, the noiseless case indicated by the red line has a much higher level of asymmetry than the noisy case where due to the presence of high levels of symmetric noise, differences often do appear symmetric). This is unlike the phase features, which maintain some level of invariance, the estimates stay away from 0 -- preserving the signal about the difference of indecomposable $\chi^2$ components -- and the mode is nearer the true value, even though there is clearly some variance, however this is expected as its PhD population expression is invariant, but not its estimator, furthermore the frequencies are sampled (with the median heuristic bandwidth) and not learnt. This suggests that phase features are more suitable for invariant learning on distributions than MMD on paired differences. The Fourier features are also given for comparison, but these are not expected to be invariant, as shown.
 \subsection{Characteristic and Phase Function Plots}
 \label{sec:char_phase}
\FloatBarrier
\begin{figure}[h]
\centering
\includegraphics[width=\textwidth]{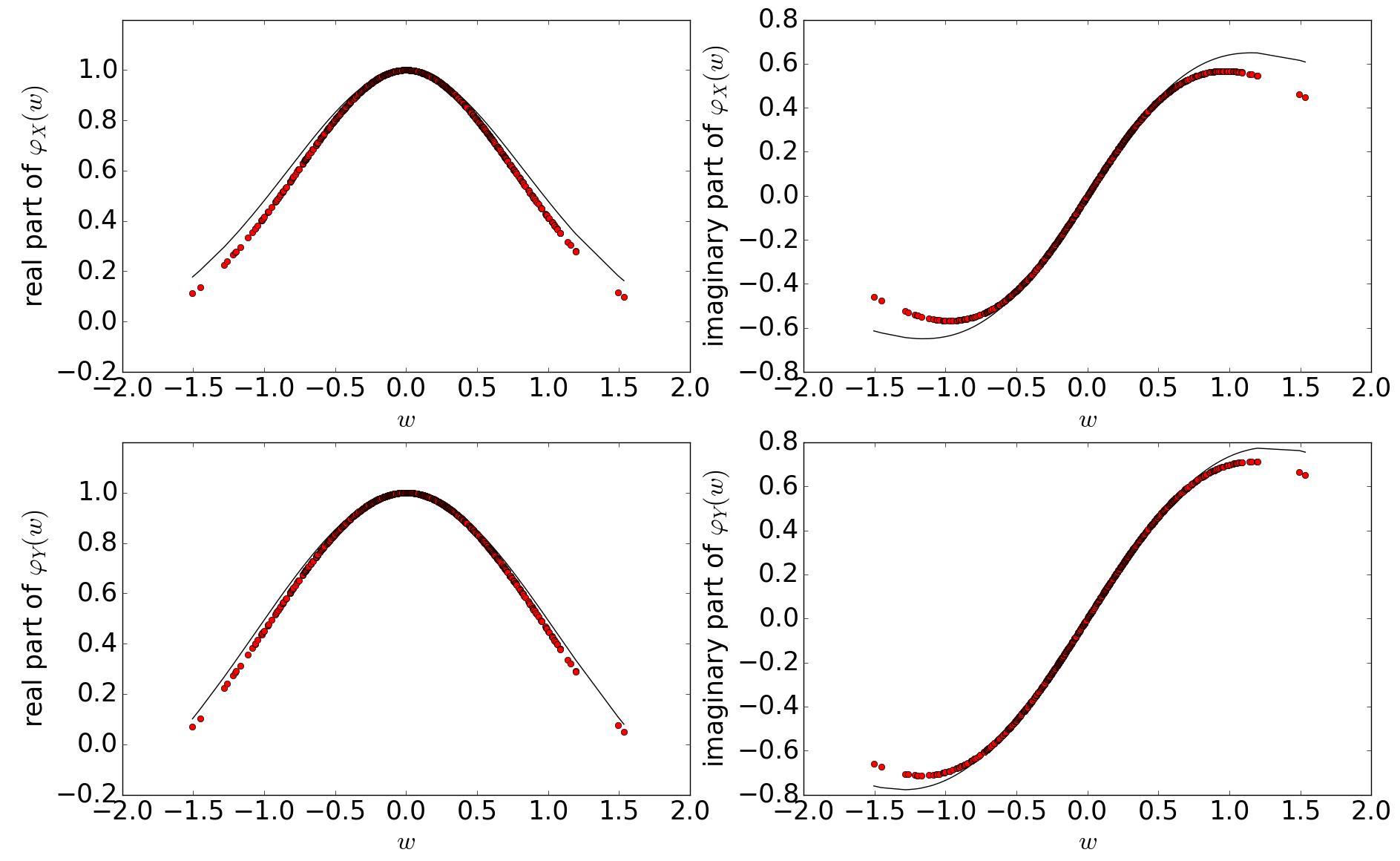}
\caption{The black line here correspond to the real and imaginary part of the true characteristic function of the $\chi^{2}(4)/4$ and $\chi^{2}(8)/8$ distribution, denoted $X, Y$ on the top and bottom graphs respectively. The red points denote the empirical characteristic function constructed with $750$ frequencies sampled from a Gaussian kernel with $\sigma = 2$ using a bag size of $1000$ observations, with some additional Gaussian noise.}
\label{fig: char_ratio}
\end{figure}
\begin{figure}[h]
\centering
\includegraphics[width=1\textwidth]{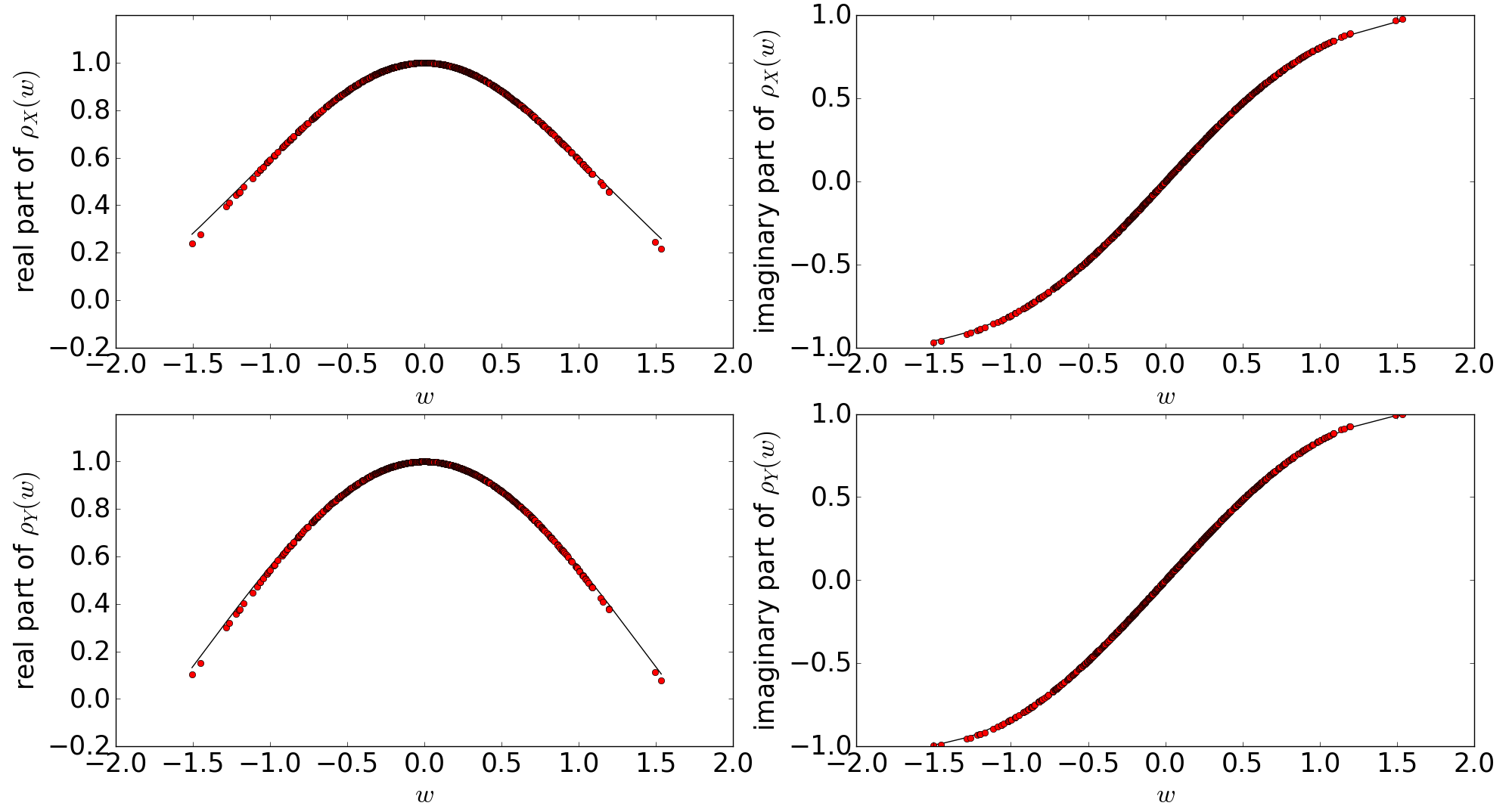}
\caption{The black line here correspond to the real and imaginary part of the true phase function of the $\chi^{2}(4)/4$ and $\chi^{2}(8)/8$ distribution, denoted $X, Y$ on the top and bottom graphs respectively. The red points denote the empirical phase function constructed with $750$ frequencies from a Gaussian kernel with $\sigma = 2$ using a bag size of $1000$ observations, with some additional Gaussian noise.} 
\label{fig: phase_ratio}
\end{figure}
\FloatBarrier
\begin{figure}[h]
\centering
\includegraphics[width=1\textwidth]{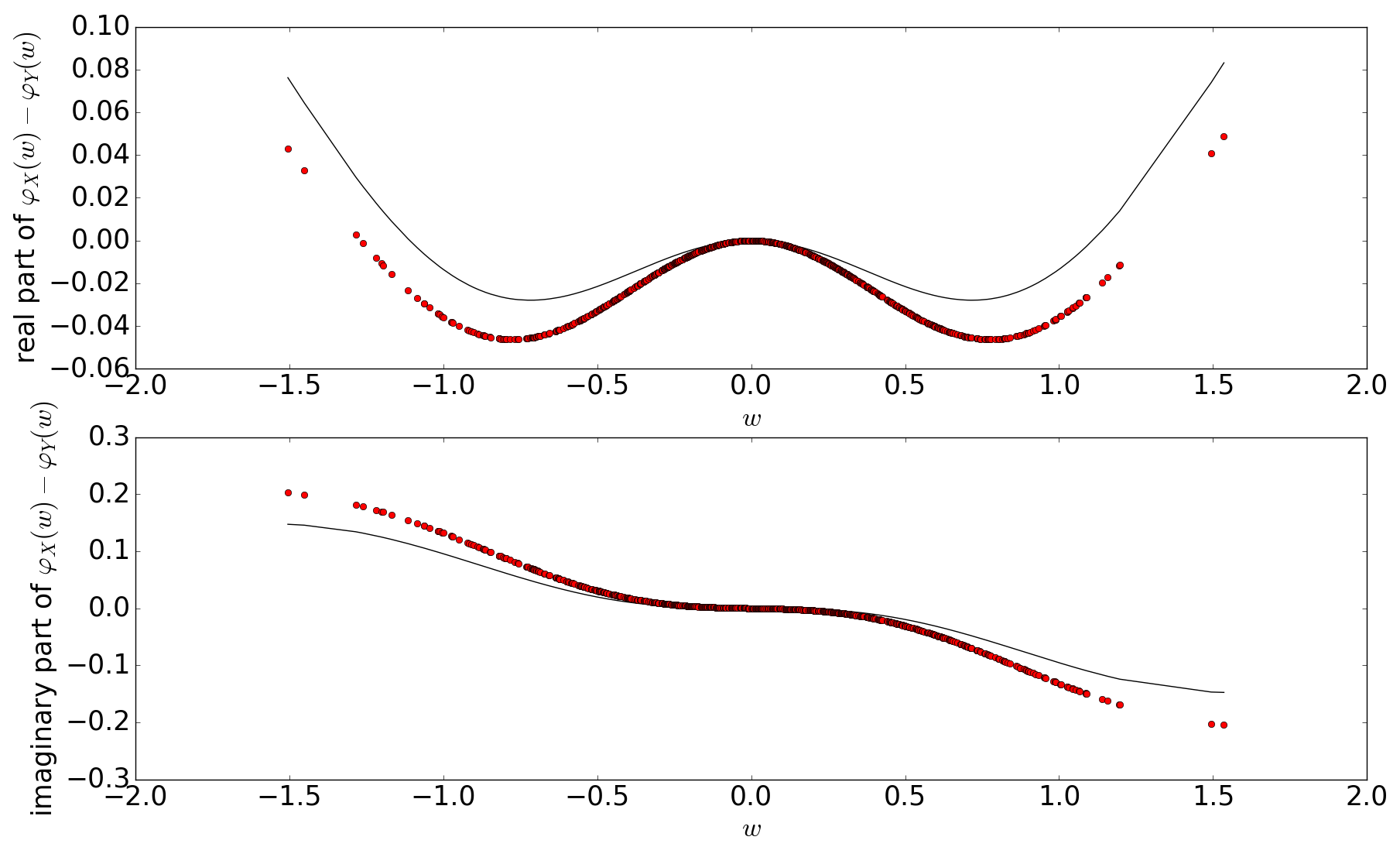}
\caption{The top and bottom graph denotes the difference in the real and imaginary part of the characteristic function for the $\chi^{2}(4)/4$ and $\chi^{2}(8)/8$ as in figure \ref{fig: char_ratio}. }
\end{figure}
\begin{figure}[h]
\centering
\includegraphics[width=\textwidth]{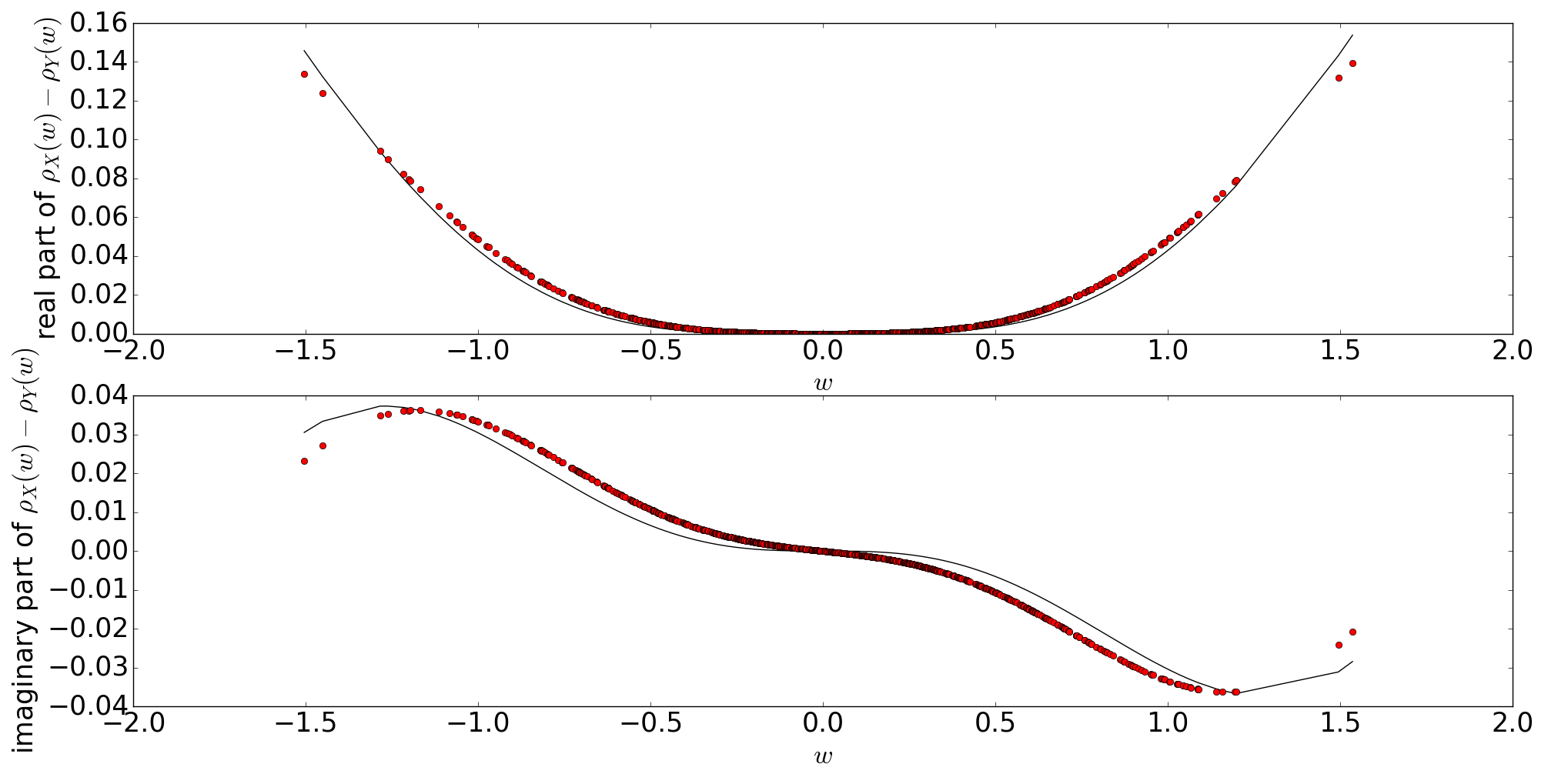}
\caption{The top and bottom graph denotes the difference in the real and imaginary part of the phase function for the $\chi^{2}(4)/4$ and $\chi^{2}(8)/8$ as in figure \ref{fig: phase_ratio}. }
\end{figure}
\FloatBarrier
 \subsection{Two-Sample Tests with Invariances}
\FloatBarrier
\subsubsection{Synthetic $\chi^2$ Dataset}
\begin{figure}[!ht]
    \centering
    \begin{subfigure}
        \centering
        \includegraphics[width=0.49\textwidth]{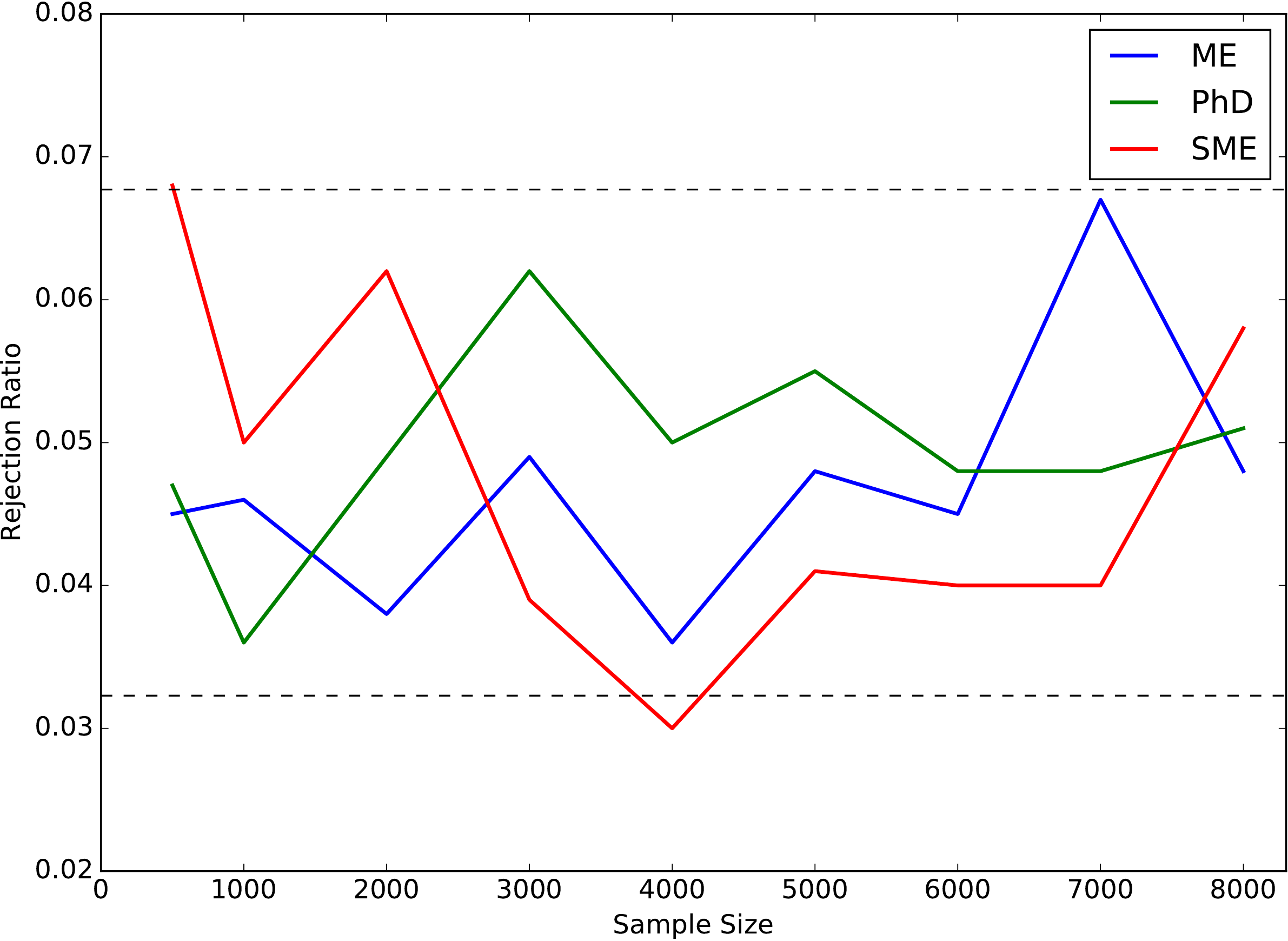}
    \end{subfigure}%
    ~ 
    \begin{subfigure}
        \centering
     \includegraphics[width=0.49\textwidth]{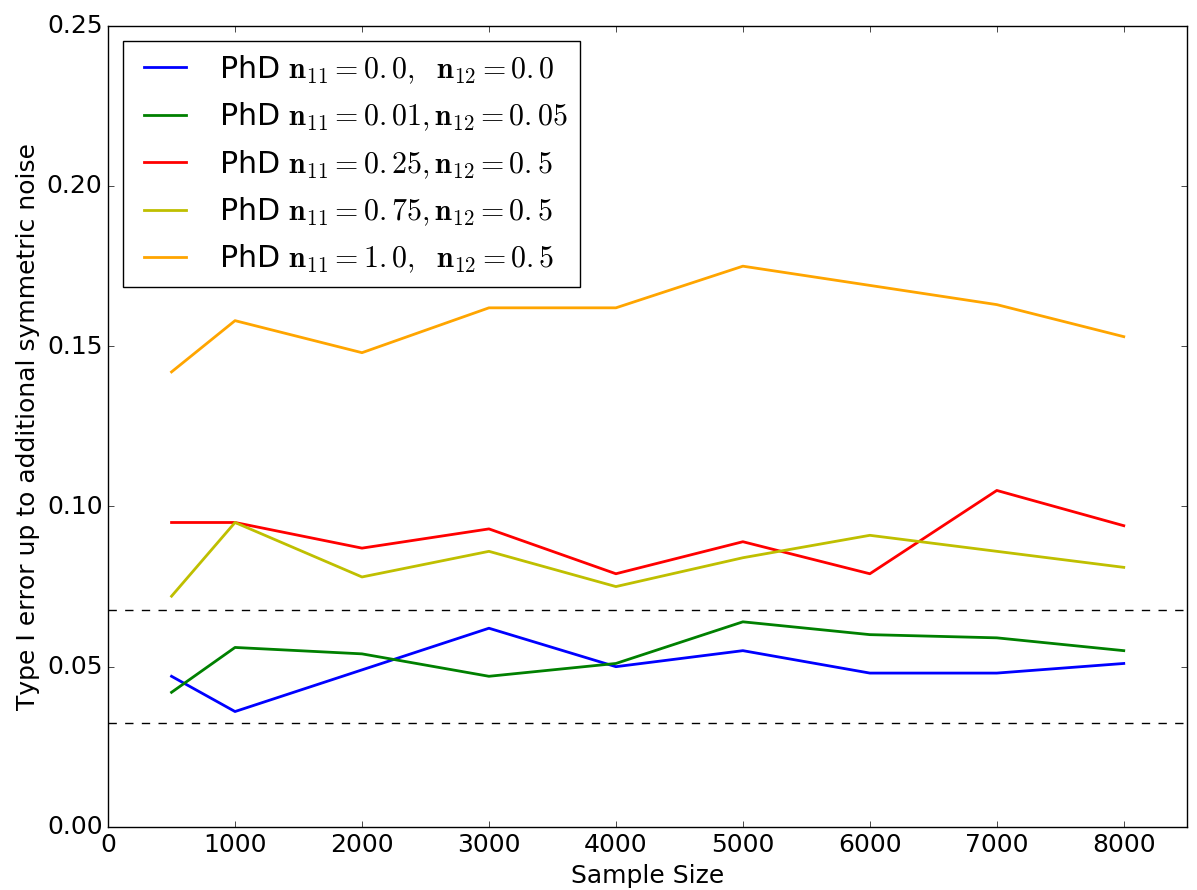}
    \end{subfigure}
    \caption{Extra Type I error results for the synthetic example with $\chi^2$
    \textbf{Left:} With no noise added for the ME, PhD and SME test. \textbf{Right:} Various additive Gaussian components, our base distribution without addition of noise is $\chi^2 (4)/ 4$. Here $n_{11}$ refers to the noise to signal ratio for the first set of samples and $n_{12}$ refers to the second set of samples.}
    \label{fig:phd}
\end{figure}
\vspace{0.3cm}
In figure above, the black dashed line is the $99\%$ Wald interval $\alpha \pm 2.57 \sqrt{\alpha (1 - \alpha) / 1000}$, where here $\alpha = 0.05$ is the significance level and $1000$ is the number of repetitions. 

On the left figure, we see that indeed all three test considered in this paper indeed controls the Type I error, when the underlying distribution between the two sets of sample is the same, note here no additional noise is added. 

On the right figure, we see that the PhD statistic controls Type I error for no added Gaussian noise, and also control Type I error for small differences in additive Gaussian components, unlike the ME test. However, we see that the type I error for a larger noise to signal ratio on the two set of samples indeed does alleviate the Type I error. This is not surprising, as the null distribution was constructed by using a permutation test, using:
\begin{equation*}
\varphi_{null} = \frac 1 2\varphi_{X_0} \varphi_U + \frac 1 2 \varphi_{X_0} \varphi_V = \varphi_{X_0}  (\frac 1 2 \varphi_{U} + \frac 1 2 \varphi_{V} ),
\end{equation*}
and if the estimated phase features are biased, in the regime with large additive Gaussian noise, then the following may not be true approximately: $\hat\rho_{null} = \hat\rho_{X_0} = \hat\rho_{Y_0}$, leading a to a biased null distribution.

In practice, if it is subtle effects we are looking for, with larger samples, we recommend the use of the SME test, however if this is not the case, then the PhD test is more appropriate, as it has good power for low sample size. In fact, the PhD test has power comparable with that of the ME test, however users should use it with caution, as it does not control the Type I error for larger additional SPD differences and requires more computational power.
\subsubsection{Higgs Dataset}
\FloatBarrier
\begin{table}[ht!]
\caption{Power for various sample size for high level features of the Higgs dataset}
\vskip 0.15in
\label{tab:higgs}
\begin{center}
\begin{small}
\begin{sc}
\begin{tabular}{lcccr}
\hline
Sample Size $N$ & SME Power & ME Power \\ 
\hline
$500$             & $0.94  $   &$ 1.0 $     \\
$600$             & $0.969  $   &$ 0.999 $   \\
$700$             & $0.987   $ & $1.0  $    \\
$800$             & $0.989  $   & $1.0  $    \\
$900$             & $0.994   $  & $ 1.0   $      \\
$1000$            &      $0.995$    & $1.0$          \\ 
\hline
\end{tabular}
\end{sc}
\end{small}
\end{center}
\vskip -0.1in
\end{table}
\FloatBarrier
The table here refers to the high level features of the Higgs dataset, which have been shown to be discriminative in \cite{baldi2014searching}. In this case, clearly both the ME and SME achieve good power, note here the SME has slightly less power, due to using only half of the samples to keep independence.
\FloatBarrier
\begin{figure}[ht!]
    \centering
    \begin{subfigure}
        \centering
        \includegraphics[width=0.49\textwidth]{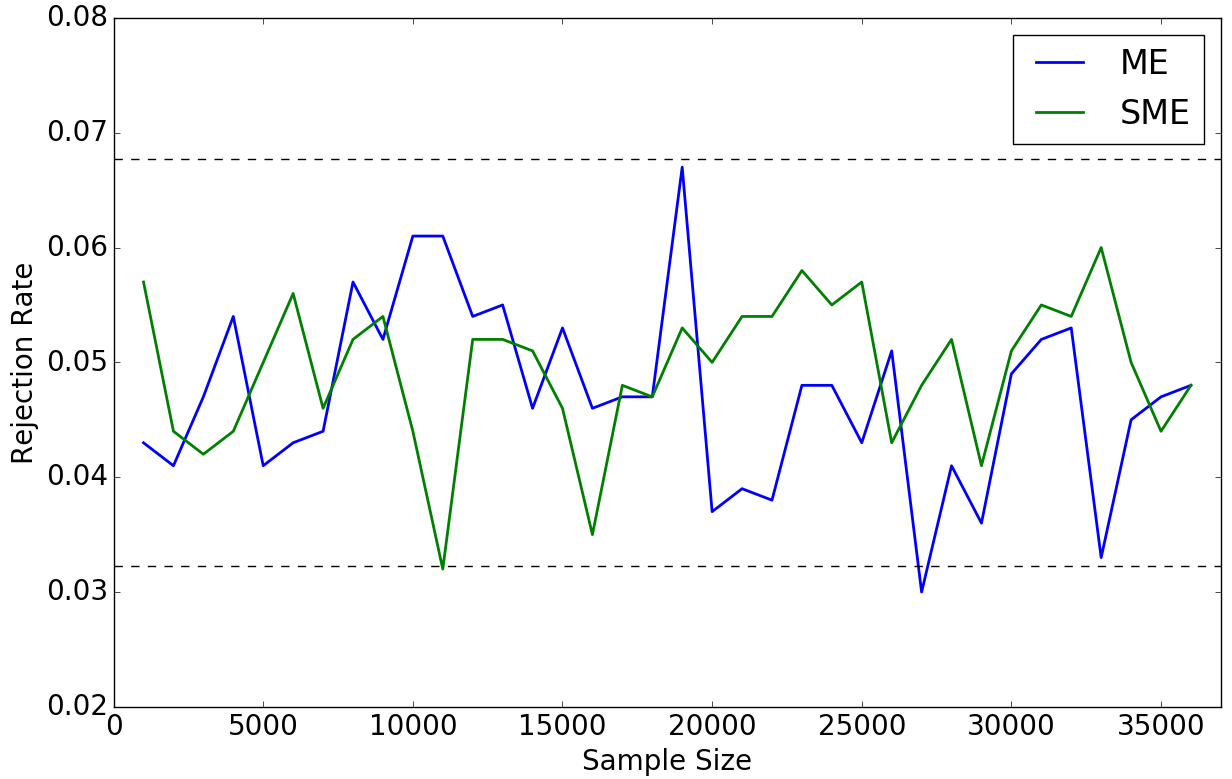}
    \end{subfigure}%
    ~ 
    \begin{subfigure}
        \centering
        \includegraphics[width=0.49\textwidth]{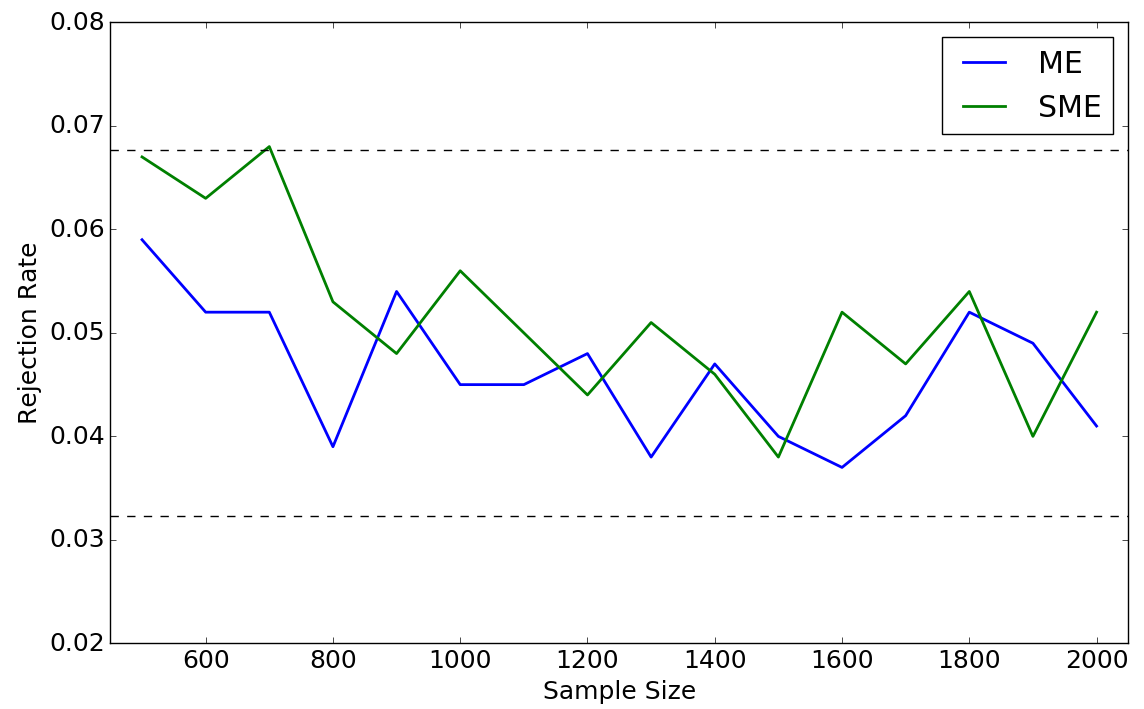}
    \end{subfigure}
    \caption{Type I error for the Higgs Dataset. \textbf{Left:} Extremely low level features \textbf{Right:} High level features. The black dashed line is the $99\%$ Wald interval $\alpha \pm 2.57 \sqrt{\alpha (1 - \alpha) / 1000}$, where here $\alpha = 0.05$ is the significance level and $1000$ is the number of repetitions.}
    \label{fig:higgs_type1}
\end{figure}
\vspace{0.3cm}
\FloatBarrier
The two figures here show that the Type I error is controlled for the ME and SME test, when we have $X_0 \eqd Y_0$, where we only consider samples drawn from $Y$, corresponding to the distribution of the processes where the Higgs Boson are produced. Note that on the right graph, the Type I error at first may be slightly alleviated due to small set of samples. 
\subsubsection{Aerosol Dataset}
\begin{figure}[ht!]
\centering
\includegraphics[scale = 0.4]{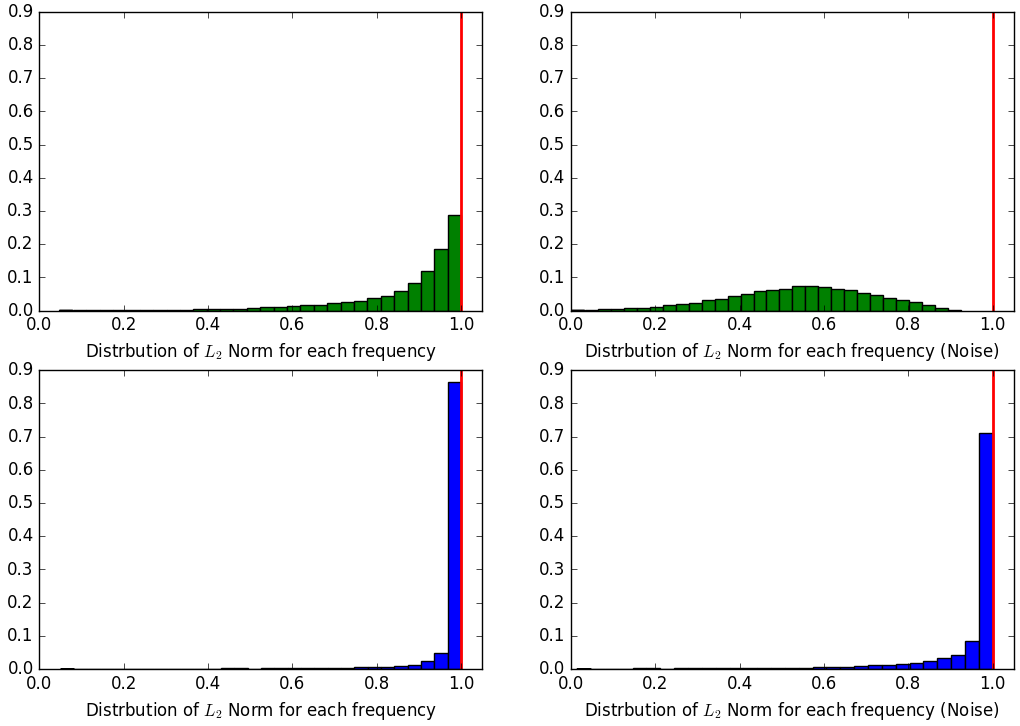}
\caption{Histograms for the distribution of the $L_2$ norm of the averages of Fourier features over each frequency $w$ for the original aerosol test set and the aerosol test set with added noise ($\sigma = 3$), here red line denotes the unit norm representing the phase features  \textbf{Top Green:}  Random Fourier Features $w$ (with the optimised kernel bandwidth)  \textbf{Bottom Blue:} Learnt Fourier features $w$ from the Fourier Neural Network.
\label{fig:freq_aero}
}
\end{figure}
\vspace{0.3cm}
We here provide some additional results for the Aerosol Dataset. First, we provide the average RMSE on the aerosol dataset (without noise on test set), based on $10$ runs, for different train and test splits in Table \ref{tab:aerosol}.
\FloatBarrier
\begin{table}[!ht]
\caption{Average RMSE for the Aerosol Dataset across $10$ runs, for different train and test splits, with standard deviation in brackets}
\vskip 0.15in
\label{tab:aerosol}
\begin{center}
\begin{small}
\begin{sc}
\begin{tabular}{lccccr}
\hline
 & Fourier NN & Phase NN & GLRR & PLRR \\ 
\hline
No noise & $0.101\ (0.011) $   &$ 0.101\ (0.008)$  &$ 0.079 \ (0.010) $ & $0.085\ (0.009)$ \\
\hline
\end{tabular}
\end{sc}
\end{small}
\end{center}
\vskip -0.1in
\end{table}
\FloatBarrier

In the experiments for the Aerosol covariate shift and above, we have seen that the Fourier NN performs similarly to the Phase NN, even under the addition of Gaussian noise, here we provide some possible insights. From the trained Fourier NN on the original dataset, we extract the frequencies $w$ learnt and compute $\left\Vert \mathbb{\hat{E}}\xi_{\omega}(X)\right\Vert$ for each frequency over the original and noisy test set, similarly we do this for the frequencies generated from the Gaussian kernel (with the optimised bandwidth on the original aerosol dataset). We show the empirical distribution of both of these in the figure above, we see that the discriminative frequencies learnt on the training data correspond to the Fourier features which are nearly normalised (i.e. they are close to unit norm like phase features, shown by the red line), this may suggest that the learnt frequencies have captured a notion of invariance to additive SPD components on just the training data. This provides insight into good performance of Fourier NN even under the covariate shift. It also indicates that the original Aerosol data potentially has irrelevant SPD noise components that the Fourier NN has learnt to ignore.

\section{Implementation Details}
\setcounter{figure}{0}    
\setcounter{table}{0}    
\setcounter{algorithm}{0}
\subsection{PhD two sample test}
For the PhD two sample test for the toy dataset, for each of the $1000$ runs, we use a permutation size of $400$, with the number of frequencies sampled set at $50$. Here the frequencies are sampled using the radial frequency distribution, where $\Sigma$ is chosen to be $\sigma^2 \textbf{I}$, with $\sigma^2$ being the empirical variance of the two set of samples.
The Radial Frequency Distribution is defined as follows:
$$ \mathbf{w} = R \Sigma ^ {- \frac{1}{2}} \boldsymbol{\psi} $$
where $\boldsymbol \psi \in \mathbb{R}^n$ is uniformly distributed on the $L_2$ unit sphere $\mathcal{S}_{n-1}$, and $R \in \mathbb{R}_{+}$ is a radius drawn independently from a folded Gaussian $\mathcal{N}^+(0,1)$. The radial frequency distribution is useful in high dimensions, as unlike the normal distributions, which `under samples' the low or middle frequencies, it is able to sample a broader range of frequencies due to its form. By covering a broader range of frequencies, we may be able to `better encode' information of the distribution represented by the bags, leading to a feature map that is more informative.

\subsection{Aerosol Dataset}
For the network, we use a squared loss function with an additional $L_2$ weight decay for regularisation, with a separate regularisation parameter for the two individual layers. For optimisation, we again use ADAM \cite{kingma2014adam} with fixed learning rate decay and $120$ epochs, with a batch size of $10$. We perform a 3-fold cross validation, and compute the MSE. We tune the learning rate, regularisation parameters and also number of frequencies for the neural network, here we initialise the first layer with Gaussian distribution with standard deviation = $1/\gamma_0$, where $\gamma_0$ denote the median heuristic.
\subsection{Dark Matter Dataset}
For all methods we sample frequencies from the normal distribution (with standard deviation = $1/\gamma_0$, where $\gamma_0$ denote the median heuristic.). After sampling a set of frequencies, we tune the scale of the set of frequencies and also the ridge regularisation parameter using the validation set. In particular we use $75$ frequencies on the first and second level of the kernel whenever they are used. Note we use the same set of frequencies (at each individual kernel level) across all the methods in a single run to allow for easier comparison, with potentially different scale tuned on the validation set. 
\end{document}